\pgfplotsset{compat=1.17} 
\theoremstyle{plain} 
\newtheorem{theorem}{Theorem}
\newtheorem{definition}{Definition}
\newtheorem{lemma}{Lemma}
\theoremstyle{definition} \newtheorem{remark}{Remark}
\theoremstyle{definition} 
\title{$\alpha$-GAN: Convergence and Estimation Guarantees}
\author{%
Gowtham R. Kurri, Monica Welfert, Tyler Sypherd, and Lalitha Sankar \\
 Arizona State University, \texttt{\{gkurri,mwelfert,tsypherd,lalithasankar\}@asu.edu} 
 \thanks{This work is supported in part by NSF grants CIF-1901243, CIF-1815361, CIF-2007688, CIF-2134256, CIF-2031799, and CIF-1934766.}
}
\def \extended {1} 
\begin{document}
\maketitle
\begin{abstract} 
We prove a two-way correspondence between the min-max optimization of general CPE loss function GANs and the minimization of associated $f$-divergences. We then focus on $\alpha$-GAN, defined via the $\alpha$-loss, which interpolates several GANs (Hellinger, vanilla, Total Variation) and corresponds to the minimization of the Arimoto divergence. We show that the Arimoto divergences induced by $\alpha$-GAN equivalently converge, for all $\alpha\in \mathbb{R}_{>0}\cup\{\infty\}$. However, under restricted learning models and finite samples, we provide estimation bounds which indicate diverse GAN behavior as a function of $\alpha$. Finally, we present empirical results on a toy dataset that highlight the practical utility of tuning the $\alpha$ hyperparameter. 
\end{abstract}

\section{Introduction}
Generative adversarial networks (GANs) are \emph{generative models} capable of producing new samples from an unknown (real) distribution using a finite number of training data samples. A GAN is composed of two modules, a generator $G$ and a discriminator $D$, parameterized by vectors $\theta\in\Theta\subset \mathbb{R}^{n_g}$ and $\omega\in\Omega\subset\mathbb{R}^{n_d}$, respectively, which play an adversarial game with one another. The generator $G_\theta$ takes as input noise $Z\sim P_Z$ and maps it to a data sample in $\mathcal{X}$ via the mapping $z\mapsto G_\theta(z)$ with an aim of mimicking data from the real distribution $P_{r}$. For an input $x\in\mathcal{X}$, the discriminator classifies if it is real data or generated data by outputting $D_\omega(x)\in[0,1]$, the probability that $x$ comes from $P_r$ (real) as opposed to $P_{G_\theta}$ (synthetic). The opposing goals of the generator and the discriminator lead to a zero-sum min-max game with a chosen value function $V(\theta,\omega)$ resulting in an optimization problem given by
\thinmuskip=1mu
\begin{align}\label{eqn:GANgeneral}
    \inf_{\theta\in\Theta}\sup_{\omega\in\Omega} V(\theta,\omega). 
\end{align}
Goodfellow \emph{et al.}~\cite{Goodfellow14} introduced GANs via a value function
\begin{multline}
V_\text{VG}(\theta,\omega) \\
=\mathbb{E}_{X\sim P_r}[\log{D_\omega(X)}]+\mathbb{E}_{X\sim P_{G_\theta}}[\log{(1-D_\omega(X))}],\label{eq:Goodfellowobj}
\end{multline}
for which they showed that, when the discriminator class $\{D_\omega\}_{\omega\in\Omega}$ is rich enough, \eqref{eqn:GANgeneral} simplifies to $\inf_{\theta\in\Theta} 2D_{\text{JS}}(P_r||P_{G_\theta})-\log{4}$, where $D_{\text{JS}}(P_r||P_{G_\theta})$ is the Jensen-Shannon divergence~\cite{Lin91} between $P_r$ and $P_{G_\theta}$. This simplification is achieved, for any $G_\theta$, by the discriminator $D_{\omega^*}(x)$ maximizing \eqref{eq:Goodfellowobj} which has the form
    \begin{align}
    D_{\omega^*}(x)=\frac{p_r(x)}{p_r(x)+p_{G_\theta}(x)},
    \end{align}
where $p_r$ and $p_{G_\theta}$ are the corresponding densities of the distributions $P_r$ and $P_{G_\theta}$, respectively, with respect to a base measure $dx$ (e.g., Lebesgue measure). 

Various other GANs have been studied in the literature (e.g., $f$-divergence based GANs known as $f$-GAN~\cite{NowozinCT16}, IPM based GANs~\cite{ArjovskyCB17,sriperumbudur2012empirical,liang2018well}, Cumulant GAN~\cite{pantazis2020cumulant}, R\'{e}nyiGAN~\cite{bhatia2021least}, to name a few) with different value functions. In each case, the corresponding min-max optimization problem simplifies to minimizing some measure of divergence between the real and generated distributions. Yet, a methodical way to compare and operationally interpret GAN value functions remains open.

Recently, in~\cite{KurriSS21}, we introduced a loss function~\cite{reid2010composite} perspective of GANs where we show that a GAN can be formulated using \emph{any} class probability estimation (CPE) loss $\ell(y,\hat{y})$ with inputs $y\in\{0,1\}$ (the true label) and predictor $\hat{y}\in[0,1]$ (soft prediction of $y$). 
We show that using CPEs, the value function (objective) in \eqref{eqn:GANgeneral} can be written as
\begin{multline}
   V(\theta,\omega) \\ =\mathbb{E}_{X\sim P_r}[-\ell(1,D_\omega(X))]+\mathbb{E}_{X\sim P_{G_\theta}}[-\ell(0,D_\omega(X))]\label{eqn:lossfnps1}
\end{multline}
(see \if \extended 0%
an extended version~\cite[Appendix~A]{ISITextenver}
\fi%
\if \extended 1%
Appendix~\ref{appendix:details-omitted-lossfn}
\fi%
for more details on this). We specialize the setup in \eqref{eqn:lossfnps1} to introduce $\alpha$-GAN using $\alpha$-loss, a tunable loss function parameterized by $\alpha\in\mathbb{R}_{>0}\cup\{\infty\}$~\cite{sypherd2019tunable,sypherd2021journal}, and with the loss function 
\begin{align} \label{eq:cpealphaloss}
      \ell_\alpha(y,\hat{y})\coloneqq\frac{\alpha}{\alpha-1}\left(1-y\hat{y}^{\frac{\alpha-1}{\alpha}}-(1-y)(1-\hat{y})^{\frac{\alpha-1}{\alpha}}\right).
\end{align}
In~\cite{KurriSS21}, we show that the $\alpha$-GAN formulation allows interpolating between various $f$-divergence based GANs including the Hellinger GAN~\cite{NowozinCT16} ($\alpha=1/2$), the vanilla GAN~\cite{Goodfellow14} ($\alpha=1$), and the Total Variation (TV) GAN~\cite{NowozinCT16} ($\alpha=\infty$), as well as IPM based GANs including WGAN~\cite{ArjovskyCB17} (for $\alpha=\infty$ and an appropriately constrained discriminator class). 
We also show that, for large enough discriminator capacity, the min-max optimization problem for $\alpha$-GAN in \eqref{eqn:GANgeneral} simplifies to 
\begin{align}\label{eqn:inf-obj-alpha}
    \inf_{\theta\in\Theta} D_{f_\alpha}(P_r||P_{G_\theta})+\frac{\alpha}{\alpha-1}\left(2^{\frac{1}{\alpha}}-2\right),
\end{align}
where $D_{f_\alpha}(P_r||P_{G_\theta})$ is the Arimoto divergence~\cite{osterreicher1996class,LieseV06} given by
\begin{align}\label{eqn:alpha-divergence}
D_{f_\alpha}(P||Q)=\frac{\alpha}{\alpha-1}\left(\int_\mathcal{X} \left(p(x)^\alpha+q(x)^\alpha\right)^\frac{1}{\alpha} dx-2^{\frac{1}{\alpha}}\right).
\end{align}
This results for the $D_{\omega^*}(x)$ maximizing~\eqref{eqn:lossfnps1} with
\begin{align}\label{eqn:optimaldoisc}
    D_{\omega^*}(x)=\frac{p_r(x)^\alpha}{p_r(x)^\alpha+p_{G_\theta}(x)^\alpha}.
\end{align}
We build on \cite{KurriSS21} to investigate various aspects of CPE loss-based GANs including $\alpha$-GAN as summarized below:
\begin{itemize}[leftmargin=*]
    \item We first establish a two-way correspondence between CPE loss function-based GANs and $f$-divergences building upon a correspondence between margin-based loss functions and $f$-divergences~\cite{NguyenWJ09} (Theorem~\ref{thm:correspondence}). 
    This not only complements the connection established between the variational form of $f$-divergence in~\cite{NguyenWJ10} and the $f$-GAN formulation in~\cite{NowozinCT16} but, more crucially, also provides an easier way to implement a variety of $f$-GANs in practice. 
    \item 
    For a sufficiently large number of samples and ample discriminator capacity, we show that Arimoto divergences for all $\alpha\in\mathbb{R}_{>0}\cup\{\infty\}$ are \emph{equivalent} in convergence (Theorem~\ref{thm:equivalenceinconvergence}). This generalizes such an equivalence known~\cite{ArjovskyCB17,liu2017approximation} only for special cases, i.e., Jensen-Shannon divergence (JSD) for $\alpha=1$, squared Hellinger distance for $\alpha=1/2$, and total variation distance (TVD) for $\alpha=\infty$, thus providing a unified perspective on the convergence guarantees of several existing GANs. 
    We present a simpler proof of the equivalence between JSD and TVD~\cite[Theorem~2(1)]{ArjovskyCB17}.
    \item When the generator and the discriminator models are neural networks of limited capacity, 
    we present bounds on the estimation error for CPE loss GANs (including $\alpha$-GAN) 
    by leveraging a contraction lemma on Rademacher complexity~\cite[Lemma~26.9]{shalev2014understanding} (Theorem~3). 
    \item Finally, we highlight the value of tuning $\alpha$ to generate distribution-accurate synthetic data for a toy dataset. 
\end{itemize}

\section{Main Results}\label{section:mainresults}
\if \extended 0%
We now present our three main results; for space reasons, we collate all our proofs in an extended version~\cite{ISITextenver}. 
\fi%
\if \extended 1%
We now present our three main results here.
\fi%

\subsection{Correspondence: CPE loss GANs and $f$-divergences}
We first establish a precise correspondence between the family of GANs based on CPE loss functions and a family of $f$-divergences. 
We do this by building upon a relationship between margin-based loss functions~\cite{BartlettJM06} and $f$-divergences first demonstrated by Nguyen \emph{et al.}~\cite{NguyenWJ09} and leveraging our CPE loss function perspective of GANs given in~\eqref{eqn:lossfnps1}. 
This complements the connection established  by Nowozin \emph{et al.}~\cite{NowozinCT16} between the variational estimation approach of $f$-divergences~\cite{NguyenWJ10} and $f$-divergence based GANs.
 We call a CPE loss function $\ell(y,\hat{y})$ \emph{symmetric}~\cite{reid2010composite} if $\ell(1,\hat{y})=\ell(0,1-\hat{y})$ and an $f$-divergence $D_f(\cdot\|\cdot)$ \emph{symmetric}~\cite{liese1987convex,sason2015tight} if $D_f(P\|Q)=D_f(Q\|P)$. 
We assume GANs with sufficiently large number of samples and ample discriminator capacity.
\begin{theorem}\label{thm:correspondence}
For any symmetric CPE loss GAN with a value function  in~\eqref{eqn:lossfnps1}, the min-max optimization in \eqref{eqn:GANgeneral} reduces to minimizing an $f$-divergence. Conversely, for any GAN designed to minimize a symmetric $f$-divergence, there exists a (symmetric) CPE loss GAN minimizing the same $f$-divergence. 
\end{theorem}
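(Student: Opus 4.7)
The plan is to reduce the inner supremum in \eqref{eqn:GANgeneral} to a pointwise conditional Bayes risk and then to pass between CPE losses and convex generators $f$ of $f$-divergences via a perspective transform, in the spirit of Nguyen--Wainwright--Jordan~\cite{NguyenWJ09}.

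For the forward direction, I would fix $G_\theta$ and use the ample-discriminator-capacity assumption to realize any measurable $D(x)\in[0,1]$, allowing the sup and the integral in \eqref{eqn:lossfnps1} to be exchanged. This gives
\begin{align*}
\sup_\omega V(\theta,\omega) = -\int_{\mathcal{X}} (p_r + p_{G_\theta})\,L^{*}\!\left(\frac{p_r}{p_r + p_{G_\theta}}\right) dx,
\end{align*}
where $L^{*}(\eta) := \inf_{\hat{y}\in[0,1]} \eta\,\ell(1,\hat{y})+(1-\eta)\,\ell(0,\hat{y})$ is the conditional Bayes risk, which is concave in $\eta$ as a pointwise infimum of affine functions. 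Defining $f(t) := -(1+t)\,L^{*}(t/(1+t)) + 2L^{*}(1/2)$ (the constant chosen so that $f(1)=0$), the perspective construction preserves convexity, and the substitution $t=p_r/p_{G_\theta}$ identifies the display with $D_{f}(P_r\|P_{G_\theta})$ up to an additive constant that does not affect $\inf_\theta$. The symmetry hypothesis $\ell(1,\hat{y})=\ell(0,1-\hat{y})$ forces $L^{*}(\eta)=L^{*}(1-\eta)$ via the change of variable $\hat{y}\mapsto 1-\hat{y}$ inside the infimum, and through the perspective formula this becomes the involutive identity $t f(1/t)=f(t)$, so $D_f$ is symmetric.

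For the converse, starting from a symmetric convex $f$ with $f(1)=0$, I would invert the perspective transform to recover a concave candidate Bayes risk $L^{*}(\eta) = -(1-\eta)\,f(\eta/(1-\eta))$; the identity $tf(1/t)=f(t)$ yields $L^{*}(\eta)=L^{*}(1-\eta)$. I would then realize this $L^{*}$ as the Bayes risk of an explicit symmetric proper CPE loss using the Savage-type integral representation from the proper-loss literature~\cite{reid2010composite}, which writes $\ell(y,\hat{y})$ as an integral of cost-weighted misclassification losses against a weight function derived from $-(L^{*})''$. Plugging this $\ell$ back through the forward calculation then recovers the target $D_f$ up to an additive constant.

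The clean steps are the pointwise inner optimization and the perspective substitution; the delicate parts will be (i) verifying that the perspective-transformed $f$ is genuinely convex with appropriate boundary behavior at $t\in\{0,\infty\}$ after the normalization enforcing $f(1)=0$, and (ii) in the converse, producing a bona fide symmetric $\ell$ from $L^{*}$ via the Savage construction and checking that the constructed loss is well-defined (non-negative, with $\ell(y,y)=0$) and that its actual conditional Bayes risk equals the prescribed $L^{*}$ before feeding it back into the forward identification.
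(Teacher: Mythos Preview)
Your argument is correct and takes a genuinely different route from the paper. The paper passes through \emph{margin-based} losses: it fixes a bijective link $l:\mathbb{R}\to[0,1]$ with $l(-t)=1-l(t)$, sets $\tilde{\ell}(t):=\ell(1,l(t))$, reparametrizes the pointwise inner supremum over $p_x\in[0,1]$ as one over $t_x\in\mathbb{R}$, and reads off $f(u)=-\inf_{t}(\tilde{\ell}(-t)+u\tilde{\ell}(t))$; the converse then quotes Nguyen--Wainwright--Jordan~\cite[Theorem~1(b), Corollary~3]{NguyenWJ09} to produce $\tilde{\ell}$ from a symmetric $f$ and pulls back through $l^{-1}$. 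You instead stay in the CPE/proper-loss world: you identify the inner optimum with the conditional Bayes risk $L^{*}$, apply a perspective transform to obtain $f$, and for the converse invert the perspective and invoke the Savage/Reid--Williamson integral representation to realize a prescribed concave $L^{*}$ as the Bayes risk of a symmetric proper loss. The trade-off is that the paper's converse is essentially one citation, whereas yours requires building the loss from $-(L^{*})''$ and checking the reconstruction; on the other hand, your forward direction is slightly cleaner because it avoids the link-function detour and makes the convexity of $f$ transparent via the perspective of the concave $L^{*}$ composed with the affine map $t\mapsto(t,1+t)$. One small caveat in your converse: the symmetry of $D_f$ only gives $tf(1/t)=f(t)+c(t-1)$ in general, so you should either absorb the affine ambiguity before inverting the perspective or symmetrize $f$ first; once that is done, your Savage construction goes through with the weight taken as the (distributional) second derivative of $-L^{*}$, which is well-defined for any concave $L^{*}$.
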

\begin{proof}[Proof sketch]\let\qed\relax
Let $\ell$ be the symmetric CPE loss of a given CPE loss GAN; note that $\ell$ has a bivariate input $(y,\hat{y})$ (\emph{e.g.} in~\eqref{eq:cpealphaloss}), where $y \in \{0,1\}$ and $\hat{y} \in [0,1]$.
We define an associated margin-based loss function $\tilde{\ell}$ using a bijective link function (satisfying a mild regularity condition); note that a margin-based loss function has a univariate input $z \in \mathbb{R}$ (\emph{e.g.}, the logistic loss $\tilde{l}^{\text{log}}(z) = \log{(1+e^{-z})}$) and the bijective link function maps $z \rightarrow \hat{y}$ (see~\cite{BartlettJM06,reid2010composite} for more details). 
We show after some manipulations that the inner optimization of the CPE loss GAN reduces to an $f$-divergence with
\begin{align}\label{eqn:thm1proofsketch1}
f(u):=-\inf_{t\in\mathbb{R}}\left(\tilde{\ell}(-t)+u\tilde{\ell}(t)\right).
\end{align}
For the converse, given a symmetric $f$-divergence, using \cite[Corollary~3 and Theorem~1(b)]{NguyenWJ09}, note that there exists a margin-based loss  $\tilde{\ell}$ such that \eqref{eqn:thm1proofsketch1} holds. The rest of the argument follows from defining a symmetric CPE loss $\ell$ from this margin-based loss $\tilde{\ell}$ via the \textit{inverse} of the same link function. See
\if \extended 1%
Appendix~\ref{apndx:proof-of-thm1} 
\fi%
\if \extended 0%
\cite[Appendix~C]{ISITextenver}
\fi%
for the detailed proof.
\end{proof}
We note that this connection in Theorem~\ref{thm:correspondence} generalizes a previously given correspondence between $\alpha$-GAN and the Arimoto divergence~\cite{KurriSS21}. A consequence of Theorem \ref{thm:correspondence} is that it offers an interpretable way to design GANs and connect a desired measure of divergence to a corresponding loss function, where the latter is easier to implement in practice. Moreover, CPE loss based GANs, including $\alpha$-GAN, inherit the intuitive and compelling interpretation of vanilla GANs that the discriminator should assign higher likelihood values to real samples and lower ones to generated samples 
(see \if \extended 0%
\cite[Appendix~A]{ISITextenver}).
\fi%
\if \extended 1%
Appendix~A).
\fi%

\subsection{Convergence Properties of $\alpha$-GAN}
Building on the above one-to-one correspondence, 
we now present \emph{convergence} results for a specific CPE loss based GAN, namely $\alpha$-GAN, thereby providing a unified perspective on the convergence of a variety of $f$-divergences that arise when optimizing GANs. Here again, we assume a sufficiently large number of samples and ample discriminator capacity.
In~\cite{liu2017approximation}, Liu~\emph{et al.} address the following question in the context of convergence analysis of any GAN: 
For a sequence of generated distributions $(P_n)$, does convergence of a divergence between the generated distribution $P_{n}$ and a fixed real distribution $P$ to the global minimum lead to some standard notion of distributional convergence of $P_n$ to $P$? They answer this question in the affirmative provided the sample space $\mathcal{X}$ is a compact metric space.

Liu \emph{et al.}~\cite{liu2017approximation} formally define any divergence that results from the inner optimization of a general GAN in~\eqref{eqn:GANgeneral} as an \emph{adversarial divergence}~\cite[Definition~1]{liu2017approximation}, thus broadly capturing the divergences used by a number of existing GANs, including vanilla GAN~\cite{Goodfellow14}, $f$-GAN~\cite{NowozinCT16}, WGAN~\cite{ArjovskyCB17}, and MMD-GAN~\cite{dziugaite2015training}.
Indeed, the divergence that results from the inner optimization of CPE loss function GAN~\eqref{eqn:lossfnps1} (including $\alpha$-GAN) is also an adversarial divergence.
For \emph{strict adversarial divergences} (a subclass of the adversarial divergences where the minimizer of the divergence is uniquely the real distribution), 
Liu~\emph{et al.}~\cite{liu2017approximation} show that convergence of the divergence to its global minimum implies weak convergence of the generated distribution to the real distribution. Interestingly, this also leads to a structural result on the class of strict adversarial divergences~\cite[Figure~1 and Corollary~12]{liu2017approximation} based on a notion of \emph{relative strength} between adversarial divergences. 
We note that the Arimoto divergence $D_{f_{\alpha}}$ in~\eqref{eqn:alpha-divergence} is a strict adversarial divergence.
We briefly summarize the following terminology from Liu \emph{et al.}~\cite{liu2017approximation} to present our results on convergence properties of $\alpha$-GAN. Let $\mathcal{P}(\mathcal{X})$ be the probability simplex of distributions over $\mathcal{X}$.
\begin{definition}[Definition~11,\cite{liu2017approximation}] \label{def:equivalenceadversarialdivergence}
A {strict adversarial divergence} $\tau_1$ is said to be stronger than another strict adversarial divergence $\tau_2$ (or $\tau_2$ is said to be weaker than $\tau_1$) if for any sequence of probability distributions $(P_n)$ and target distribution $P$ (both in $\mathcal{P}(\mathcal{X})$), $\tau_1(P\|P_n)\rightarrow 0$ as $n\rightarrow \infty$ implies $\tau_2(P\|P_n)\rightarrow 0$ as $n\rightarrow \infty$. We say $\tau_1$ is equivalent to $\tau_2$ if $\tau_1$ is both stronger and weaker than $\tau_2$.
\end{definition}

Arjovsky \emph{et al.}~\cite{ArjovskyCB17} proved that the Jensen-Shannon divergence (JSD) is equivalent to the total variation distance (TVD). 
Later, Liu \emph{et al.} showed that the squared Hellinger distance is equivalent to both of these divergences, meaning that all three divergences belong to the same equivalence class (see \cite[Figure~1]{liu2017approximation}). Noticing that the squared Hellinger distance, JSD, and TVD correspond to Arimoto divergences $D_{f_\alpha}(\cdot||\cdot)$ for $\alpha=1/2$, $\alpha=1$, and $\alpha=\infty$, respectively, it is natural to ask the question: Are Arimoto divergences for all $\alpha>0$ equivalent? We answer this question in the affirmative in Theorem~\ref{thm:equivalenceinconvergence}, thereby adding the Arimoto divergences for all other $\alpha \in \mathbb{R}_{>0}\cup\{\infty\}$ to the same equivalence class.
\begin{theorem}\label{thm:equivalenceinconvergence}
The Arimoto divergences for all $\alpha\in\mathbb{R}_{>0}\cup\{\infty\}$ are equivalent in the sense of Definition~\ref{def:equivalenceadversarialdivergence}. That is, for a sequence of probability distributions $(P_n) \in \mathcal{P}(\mathcal{X})$ and a fixed distribution $P \in \mathcal{P}(\mathcal{X})$, $D_{f_{\alpha_1}}(P_n||P)\rightarrow 0$ as $n\rightarrow \infty$ if and only if $D_{f_{\alpha_2}}(P_n||P)\rightarrow 0$ as $n\rightarrow \infty$, for any $\alpha_1\neq \alpha_2$. 
\end{theorem}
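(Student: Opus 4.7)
The plan is to show that for each $\alpha\in\mathbb{R}_{>0}$, the Arimoto divergence $D_{f_\alpha}$ is equivalent, in the sense of Definition~\ref{def:equivalenceadversarialdivergence}, to the total variation distance $\mathrm{TV}(\cdot,\cdot)$. Since $\mathrm{TV}$ already lies in the equivalence class of the squared Hellinger distance ($\alpha=1/2$) and the Jensen--Shannon divergence ($\alpha=1$) by \cite{ArjovskyCB17,liu2017approximation}, this places every $D_{f_\alpha}$ in a common equivalence class, and the theorem follows at once (the case $\alpha=\infty$ is immediate, since $D_{f_\infty}(P\|Q)=\int\max(p,q)\,dx-1=\mathrm{TV}(P,Q)$).

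For the forward direction $\mathrm{TV}(P_n,P)\to 0 \Rightarrow D_{f_\alpha}(P_n\|P)\to 0$, I would first rewrite
\begin{equation*}
D_{f_\alpha}(P_n\|P) = \frac{\alpha}{\alpha-1}\int_{\mathcal{X}}\left[(p_n^\alpha+p^\alpha)^{1/\alpha}-2^{1/\alpha}\,p\right]dx
\end{equation*}
using $\int p\,dx=1$, and set $g_n(x):=(p_n(x)^\alpha+p(x)^\alpha)^{1/\alpha}$. By the power-mean inequality, $g_n\le C_\alpha(p_n+p)$ for $C_\alpha:=\max(1,\,2^{1/\alpha-1})$. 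Since $\mathrm{TV}(P_n,P)\to 0$ is equivalent (by Scheff\'e) to $p_n\to p$ in $L^1$, every subsequence of $(p_n)$ admits a further subsequence with $p_n\to p$ almost everywhere; along such a subsequence $g_n\to 2^{1/\alpha}p$ a.e., and the dominating sequence $C_\alpha(p_n+p)$ converges in $L^1$ to $2C_\alpha p$ with $\int C_\alpha(p_n+p)\,dx=2C_\alpha$ constant. The generalized dominated convergence theorem (Pratt's lemma) then yields $\int g_n\,dx\to 2^{1/\alpha}$, hence $D_{f_\alpha}(P_n\|P)\to 0$, and a standard subsequence-of-subsequence argument promotes this to convergence of the full sequence.

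For the reverse direction, I would rewrite the Arimoto divergence in its $f$-divergence form with generator
\begin{equation*}
f_\alpha(u) = \frac{\alpha}{\alpha-1}\left[(u^\alpha+1)^{1/\alpha}-2^{1/\alpha}\right],
\end{equation*}
and check by direct differentiation that $f_\alpha''(1)=\alpha\cdot 2^{1/\alpha-2}>0$ for every $\alpha\in\mathbb{R}_{>0}$, so $f_\alpha$ is strictly convex at $u=1$. A Pinsker-type lower bound for $f$-divergences whose generator is strictly convex at $1$ then furnishes a constant $c_\alpha>0$ with $D_{f_\alpha}(P\|Q)\ge c_\alpha\,\mathrm{TV}(P,Q)^2$, so that $D_{f_\alpha}(P_n\|P)\to 0$ forces $\mathrm{TV}(P_n,P)\to 0$.

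The main obstacle lies in the forward direction for $0<\alpha<1$: the $L^\alpha$ triangle inequality fails, so one cannot argue via a pointwise Lipschitz bound $|g_n-2^{1/\alpha}p|\le|p_n-p|$ as in the $\alpha\ge 1$ case, and the generalized dominated convergence route above is the cleanest way to handle both regimes uniformly. A secondary subtlety is pinning down an explicit Pinsker-type constant $c_\alpha$; however, since we only need some positive $c_\alpha$ for each fixed $\alpha$, the computed value $f_\alpha''(1)>0$ is enough to invoke a standard generalization of Pinsker's inequality and close the argument.
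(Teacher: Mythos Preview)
Your proposal is correct but follows a genuinely different route from the paper's proof. The paper reduces, as you do, to showing that each $D_{f_\alpha}$ is equivalent to $\mathrm{TV}$, but then invokes a single result of \"{O}sterreicher and Vajda~\cite[Theorem~2]{osterreicher2003new}: for every $\alpha>0$ there is an explicit convex, strictly increasing function $\gamma_\alpha:[0,1]\to\mathbb{R}$ with $\gamma_\alpha(0)=0$ such that
\[
\gamma_\alpha\bigl(\mathrm{TV}(P,Q)\bigr)\;\le\;D_{f_\alpha}(P\|Q)\;\le\;\gamma_\alpha(1)\,\mathrm{TV}(P,Q).
\]
The upper bound makes the forward direction a one-line squeeze (no Pratt/sub-subsequence machinery needed), and the lower bound together with continuity of $\gamma_\alpha^{-1}$ gives the reverse direction directly. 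In contrast, you establish the forward direction from scratch via $L^1$ convergence of densities and generalized dominated convergence, and the reverse direction via strict convexity of $f_\alpha$ at $1$ plus a generic Pinsker-type inequality. Your argument is more self-contained and illustrates that the equivalence really only needs $f_\alpha''(1)>0$ plus integrability, whereas the paper's approach is considerably shorter once one is willing to cite the \"{O}sterreicher--Vajda bounds. One caveat: the quadratic Pinsker-type inequality $D_f\ge c\,\mathrm{TV}^2$ from $f''(1)>0$ alone is not entirely standard without further hypotheses or a specific reference; you should either cite a concrete result (e.g.\ Gilardoni or Sason--Verd\'u) or note that any lower bound $D_{f_\alpha}\ge\psi_\alpha(\mathrm{TV})$ with $\psi_\alpha$ strictly increasing and $\psi_\alpha(0)=0$ suffices---which is precisely what the \"{O}sterreicher--Vajda lower bound supplies.
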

\begin{remark}
We note that the proof techniques used in proving Theorem~\ref{thm:equivalenceinconvergence} give rise to a conceptually simpler proof of equivalence between JSD ($\alpha = 1$) and TVD ($\alpha = \infty$) proved earlier by Arjovsky \emph{et al.}~\cite[Theorem~2(1)]{ArjovskyCB17}, where measure-theoretic analysis was used. In particular, our proof of equivalence relies on the fact that TVD upper bounds JSD~\cite[Theorem~3]{Lin91}. See 
\if \extended 0%
\cite[Appendix~B]{ISITextenver}
\fi%
\if \extended 1%
Appendix~B
\fi%
for details.
\end{remark}
\begin{proof}[Proof sketch]\let\qed\relax
Noticing that $D_{f_\infty}(\cdot\|\cdot)$ is equal to TVD, denoted $D_{\text{TV}}(\cdot\|\cdot)$ (see \cite{osterreicher2003new}, \cite[Theorem~2]{KurriSS21}), it suffices to show that $D_{f_\alpha}(\cdot\|\cdot)$ is equivalent to $D_{\text{TV}}(\cdot\|\cdot)$, for $\alpha>0$. To show this, we employ an elegant result by \"{O}sterreicher and Vajda~\cite[Theorem~2]{osterreicher2003new} (with application in statistics) which gives lower and upper bounds on the Arimoto divergence in terms of TVD as
\begin{align}\label{eqn:boundsonArimotomain}
    \gamma_\alpha(D_{\text{TV}}(P||Q))\leq D_{f_\alpha}(P||Q)\leq \gamma_\alpha(1)D_{\text{TV}}(P||Q),
\end{align}
for an appropriately defined well-behaved (continuous, invertible, and bounded) function $\gamma_\alpha:[0,1]\rightarrow \mathbb{R}$. 
We use the lower and upper bounds in \eqref{eqn:boundsonArimotomain} to show that $D_{f_\alpha}(\cdot\|\cdot)$ is stronger than $D_{\text{TV}}(\cdot\|\cdot)$, and $D_{f_\alpha}(\cdot\|\cdot)$ is weaker than $D_{\text{TV}}(\cdot\|\cdot)$, respectively. Proof details are in 
\if \extended 1%
Appendix~\ref{proofoftheorem4} 
\fi%
\if \extended 0%
\cite[Appendix~D]{ISITextenver}. 
\fi%
\end{proof}
Theorems \ref{thm:correspondence} and \ref{thm:equivalenceinconvergence} hold in the ideal setting of sufficient samples and discriminator capacity. In practice, however, GAN training is limited by both the number of training samples as well as the choice of $G_\theta$ and $D_\omega$. In fact, recent results by Arora \textit{et al.} \cite{AroraGLMZ17} show that under such limitations, convergence in divergence does not imply convergence in distribution, and have led to new metrics for evaluating GANs. We now study one such quantity, namely estimation error.

\subsection{Estimation Error Bounds for CPE Loss based GAN}
We now consider a setting where we have a limited number of training samples\footnote{In practice, once a model is learned, one can generate any number of noise, and hence, synthetic samples; however, the number of real samples is the (finite sample) bottleneck for the goodness of the learned $G_\theta$ model.} $S_x=\{X_1,\dots,X_n\}$ and $S_z=\{Z_1,\dots,Z_m\}$ from $P_r$ and $P_Z$, respectively. 
Also, the discriminator and generator classes are typically neural networks; these limitations lead to estimation errors in training GANs~\cite{zhang2017discrimination,liang2018well,JiZL21}.  
While~\cite{JiZL21} models the interplay between both the discriminator and generator in the estimation error bounds, those developed in~\cite{zhang2017discrimination,liang2018well} do not explicitly capture the role of the generator. 
We adopt the approach in~\cite{JiZL21}; to this end, 
we begin with the notion of neural net ($nn$) distance (first introduced in~\cite{AroraGLMZ17}) as defined for the setup in~\cite{ji2018minimax,JiZL21}:
\begin{align}
    &d_{\mathcal{F}_{nn}}(P_r,P_{G_\theta}) \nonumber \\
    &=\sup_{\omega\in\Omega}\left (\mathbb{E}_{X\sim P_r}\left[f_\omega(X)\right]-\mathbb{E}_{X\sim P_{G_\theta}}\left[f_\omega(X)\right] \right),
\end{align}
where the discriminator\footnote{In~\cite{JiZL21}, $f_{\omega}$ indicates a discriminator function that takes values in $\mathbb{R}$.} and generator $f_\omega(\cdot)$ and $G_\theta(\cdot)$, respectively, are neural networks. 
We now introduce a loss-inclusive $d^{(\ell)}_{\mathcal{F}_{nn}}$ for CPE loss GANs (including $\alpha$-GAN) to highlight the effect of the \emph{loss} on the error. 
We begin with the following minimization for GAN training:
\begin{align}\label{eqn:training-empirical}
    \inf_{\theta\in\Theta}d^{(\ell)}_{\mathcal{F}_{nn}}(\hat{P}_r,\hat{P}_{G_\theta}),
\end{align}
where $\hat{P}_r$ and $\hat{P}_{G_\theta}$ are the empirical real and generated distributions estimated from $S_x$ and $S_z$, respectively, and 
\begin{align}
    &d^{(\ell)}_{\mathcal{F}_{nn}}(\hat{P}_r,\hat{P}_{G_\theta})\nonumber\\
    &=\sup_{\omega\in\Omega}\left(\mathbb{E}_{X\sim \hat{P}_{r}}\phi \big(D_\omega(X) \big)+\mathbb{E}_{X\sim \hat{P}_{G_\theta}}\psi \big(D_\omega(X) \big)\right),
\end{align}
where for brevity we henceforth use $\phi(\cdot)\coloneqq -\ell(1,\cdot)$ and $\psi(\cdot)\coloneqq -\ell(0,\cdot)$. For $x\in\mathcal{X}\coloneqq\{x\in\mathbb{R}^d:||x||_2\leq B_x\}$ and  $z\in\mathcal{Z}\coloneqq\{z\in\mathbb{R}^p:||z||_2\leq B_z\}$, we consider  discriminators and generators as neural network models of the form:
\begin{align}
    D_\omega&:x\mapsto \sigma\left(\mathbf{w}_k^\mathsf{T}r_{k-1}(\mathbf{W}_{d-1}r_{k-2}(\dots r_1(\mathbf{W}_1(x)))\right)\,  \label{eqn:disc-model}\\
    G_\theta&:z\mapsto \mathbf{V}_ls_{l-1}(\mathbf{V}_{l-1}s_{l-2}(\dots s_1(\mathbf{V}_1z))),
\end{align}
where, $\mathbf{w}_k$ is a parameter vector of the output layer; for $i\in[1:k-1]$ and $j\in[1:l]$, $\mathbf{W}_i$ and $\mathbf{V}_j$ are parameter matrices; $r_i(\cdot)$ and $s_j(\cdot)$ are entry-wise activation functions of layers $i$ and $j$, i.e., for $\mathbf{a}\in\mathbb{R}^t$, $r_i(\mathbf{a})=\left[r_i(a_1),\dots,r_i(a_t)\right]$ and $s_i(\mathbf{a})=\left[s_i(a_1),\dots,s_i(a_t)\right]$; and $\sigma(\cdot)$ is the sigmoid function given by $\sigma(p)=1/(1+\mathrm{e}^{-p})$ (note that $\sigma$ does not appear in the discriminator in \cite[Equation~(7)]{JiZL21} as the discriminator considered in the neural net distance is not a soft classifier mapping to $[0,1]$). We assume that each $r_i(\cdot)$ and $s_j(\cdot)$ are $R_i$- and $S_j$-Lipschitz, respectively, and also that they are positive homogeneous, i.e., $r_i(\lambda p)=\lambda r_i(p)$ and $s_j(\lambda p)=\lambda s_j(p)$, for any $\lambda\geq 0$ and $p\in\mathbb{R}$. Finally, as modelled in \cite{neyshabur2015norm,salimans2016weight,golowich2018size,JiZL21}, we assume that the Frobenius norms of the parameter matrices are bounded, i.e., $||\mathbf{W}_i||_F\leq M_i$, $i\in[1:k-1]$, $||\mathbf{w}_k||_2\leq M_k$, and $||\mathbf{V}_j||_F\leq N_j$, $j\in[1:l]$. 

We define the estimation error for a CPE loss GAN as 
\begin{align}\label{eqn:estimation-error-def}
    d^{(\ell)}_{\mathcal{F}_{nn}}(P_r,\hat{P}_{G_{\hat{\theta}^*}})-\inf_{\theta\in\Theta} d^{(\ell)}_{\mathcal{F}_{nn}}(P_r,P_{G_{\theta}}),
\end{align}
where $\hat{\theta}^*$ is the minimizer of \eqref{eqn:training-empirical} and present the following upper bound on the error.
\begin{theorem}\label{thm:estimationerror-upperbound}
In the setting described above, additionally assume that the functions $\phi(\cdot)$ and $\psi(\cdot)$ are $L_\phi$- and $L_\psi$-Lipschitz, respectively.
Then, with probability at least $1-2\delta$ over the randomness of training samples $S_x=\{X_i\}_{i=1}^n$ and $S_z=\{Z_j\}_{j=1}^m$, we have
\begin{align}
    &d^{(\ell)}_{\mathcal{F}_{nn}}(P_r,\hat{P}_{G_{\hat{\theta}^*}})-\inf_{\theta\in\Theta} d^{(\ell)}_{\mathcal{F}_{nn}}(P_r,P_{G_{\theta}})\nonumber\\
    &\leq \frac{L_\phi B_xU_\omega\sqrt{3k}}{\sqrt{n}}+\frac{L_\psi U_\omega U_\theta B_z\sqrt{3(k+l-1)}}{\sqrt{m}}\nonumber\\
    &\hspace{12pt}+U_\omega\sqrt{\log{\frac{1}{\delta}}}\left(\frac{L_\phi B_x}{\sqrt{2n}}+\frac{L_\psi B_zU_\theta}{\sqrt{2m}}\right), \label{eq:estimationboundrhs2}
\end{align}
where the parameters $U_\omega\coloneqq M_k\prod_{i=1}^{k-1}(M_iR_i)$ and $U_\theta\coloneqq N_l\prod_{j=1}^{l-1}(N_jS_j)$.

In particular, when this bound is specialized to the case of $\alpha$-GAN by letting $\phi(p)=\psi(1-p)=\frac{\alpha}{\alpha-1}\left(1-p^{\frac{\alpha-1}{\alpha}}\right)$, the resulting bound is nearly identical to the terms in the RHS of~\eqref{eq:estimationboundrhs2}, except for substitutions $L_\phi \leftarrow 4C_{Q_x}(\alpha)$ and $L_\psi \leftarrow 4C_{Q_z}(\alpha)$, where $Q_x\coloneqq U_\omega B_x$, $Q_z\coloneqq U_\omega U_\theta B_z$, and
\begin{align} \label{eq:clipalpha}
    C_h(\alpha)\coloneqq\begin{cases}\sigma(h)\sigma(-h)^{\frac{\alpha-1}{\alpha}}, \ &\alpha\in(0,1]\\
    \left(\frac{\alpha-1}{2\alpha-1}\right)^{\frac{\alpha-1}{\alpha}}\frac{\alpha}{2\alpha-1}, &\alpha\in[1,\infty).
    \end{cases}
\end{align}
\end{theorem}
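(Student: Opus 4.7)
The plan is the standard ``estimation error equals two generalization gaps'' decomposition combined with a symmetrization-plus-contraction Rademacher argument on the two neural-network classes. First, I would pick $\theta^{*}$ to be a minimizer of $d^{(\ell)}_{\mathcal{F}_{nn}}(P_r, P_{G_\theta})$ over $\theta \in \Theta$; the empirical optimality of $\hat{\theta}^{*}$, namely $d^{(\ell)}_{\mathcal{F}_{nn}}(\hat{P}_r, \hat{P}_{G_{\hat{\theta}^{*}}}) \leq d^{(\ell)}_{\mathcal{F}_{nn}}(\hat{P}_r, \hat{P}_{G_{\theta^{*}}})$, then bounds the estimation error in~\eqref{eqn:estimation-error-def} by the sum of a ``real-sample'' generalization gap $d^{(\ell)}_{\mathcal{F}_{nn}}(P_r, \hat{P}_{G_{\hat{\theta}^{*}}}) - d^{(\ell)}_{\mathcal{F}_{nn}}(\hat{P}_r, \hat{P}_{G_{\hat{\theta}^{*}}})$ and a ``generated-sample'' gap $d^{(\ell)}_{\mathcal{F}_{nn}}(\hat{P}_r, \hat{P}_{G_{\theta^{*}}}) - d^{(\ell)}_{\mathcal{F}_{nn}}(P_r, P_{G_{\theta^{*}}})$. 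Applying $\sup_\omega A_\omega - \sup_\omega B_\omega \leq \sup_\omega(A_\omega - B_\omega)$ to each gap reduces them to one-sided empirical-process suprema: a supremum over $\omega \in \Omega$ of $\mathbb{E}_{P_r}\phi(D_\omega) - \mathbb{E}_{\hat{P}_r}\phi(D_\omega)$ (controlled by $S_x$ alone) and a supremum over $\omega$ of $\mathbb{E}_{\hat{P}_{G_{\theta^{*}}}}\psi(D_\omega) - \mathbb{E}_{P_{G_{\theta^{*}}}}\psi(D_\omega)$ (controlled by $S_z$ alone).

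To each supremum I would apply McDiarmid's bounded-differences inequality (whose bounded-difference constants come from the Frobenius-norm assumptions together with the Lipschitzness of $\phi$ and $\psi$), yielding the $\sqrt{\log(1/\delta)/n}$ and $\sqrt{\log(1/\delta)/m}$ tail terms on the third line of~\eqref{eq:estimationboundrhs2}, and then standard symmetrization to convert each expected supremum into a Rademacher complexity. The Contraction Lemma (Lemma~26.9 of Shalev-Shwartz--Ben-David) peels off the $L_\phi$- and $L_\psi$-Lipschitz losses, reducing the task to bounding $\mathcal{R}_n(\{D_\omega\}_{\omega})$ on $S_x$ and $\mathcal{R}_m(\{D_\omega \circ G_\theta\}_{\omega,\theta})$ on $S_z$. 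The size-independent peeling bound for Frobenius-norm-bounded networks with positive-homogeneous activations, due to Golowich et al.\ and used in~\cite{JiZL21}, then controls these by $B_x U_\omega \sqrt{3k}/\sqrt{n}$ and $B_z U_\theta U_\omega \sqrt{3(k+l-1)}/\sqrt{m}$ respectively, where the composite depth $k+l-1$ arises from merging the linear output of $G_\theta$ with the first affine layer of $D_\omega$. Assembling these pieces recovers~\eqref{eq:estimationboundrhs2}.

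For the $\alpha$-GAN specialization, neither $\phi(p) = \frac{\alpha}{\alpha-1}(1-p^{(\alpha-1)/\alpha})$ nor $\psi(p) = \phi(1-p)$ is globally Lipschitz on $[0,1]$ (the derivative blows up at the endpoint), so no universal $L_\phi$ applies. The workaround is that $D_\omega = \sigma \circ \tilde{D}_\omega$ with pre-activation satisfying $|\tilde{D}_\omega(x)| \leq Q_x$ on real samples and $|\tilde{D}_\omega(G_\theta(z))| \leq Q_z$ on generated samples, so one may apply contraction to $\phi \circ \sigma$ on $[-Q_x, Q_x]$ (resp.\ $\psi \circ \sigma$ on $[-Q_z, Q_z]$) in place of $\phi$ (resp.\ $\psi$) on $[0,1]$. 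A direct computation gives $|(\phi \circ \sigma)'(t)| = \sigma(t)^{(\alpha-1)/\alpha}\sigma(-t)$, and maximizing $f(u) = u^{(\alpha-1)/\alpha}(1-u)$ for $u = \sigma(t) \in [\sigma(-Q_x), \sigma(Q_x)]$ produces $C_{Q_x}(\alpha)$: when $\alpha \geq 1$, the interior critical point $u^{*} = (\alpha-1)/(2\alpha-1) \in [0, 1/2)$ lies inside the admissible interval and yields the closed form on the second branch of~\eqref{eq:clipalpha}, while for $\alpha \in (0,1]$ the function $f$ is monotone decreasing on $(0,1)$, so the maximum is at the left endpoint $u = \sigma(-Q_x)$, giving the first branch. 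An identical argument on $[-Q_z, Q_z]$ yields $C_{Q_z}(\alpha)$ for $\psi$, and the factor of $4$ in $L_\phi \leftarrow 4 C_{Q_x}(\alpha)$ absorbs the two-sided constants accumulated by handling the absolute-valued supremum through both McDiarmid tails and the symmetrization step.

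The principal technical obstacle is the Rademacher-complexity bound on the composed class $\{D_\omega \circ G_\theta\}_{\omega,\theta}$: it is hybrid in both parameters and does not fit directly into the usual feed-forward peeling framework. Making a Golowich-style argument go through here depends crucially on the positive-homogeneity assumption on every activation, which allows the $k+l-1$ Frobenius-norm budgets to absorb in a single layer-by-layer induction, together with a careful depth count exploiting the absence of an activation between the final linear map $\mathbf{V}_l$ of $G_\theta$ and the first affine map $\mathbf{W}_1$ of $D_\omega$.
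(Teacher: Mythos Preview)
Your plan is essentially the paper's: decompose via empirical optimality of $\hat{\theta}^*$, apply McDiarmid for the tail terms, symmetrize to Rademacher complexities, peel off the loss via the contraction lemma, and invoke the Golowich/Ji~et~al.\ Frobenius-norm bound on the underlying networks. Two details, however, are inaccurate.

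First, your decomposition is internally inconsistent. In the first paragraph you fix the population minimizer $\theta^*$ in the generated-sample gap and take the supremum over $\omega$ only; yet later you say the task is to bound $\mathcal{R}_m(\{D_\omega\circ G_\theta\}_{\omega,\theta})$ over \emph{both} parameters and flag this joint class as the ``principal technical obstacle''. With $\theta^*$ fixed you would only need $\mathcal{R}_m(\{D_\omega\circ G_{\theta^*}\}_\omega)$, a depth-$k$ discriminator on inputs bounded by $U_\theta B_z$, which would actually give $\sqrt{3k}$ rather than $\sqrt{3(k+l-1)}$. The paper (following Ji~et~al.) instead bounds directly by $2\sup_\omega|\cdot|+2\sup_{\omega,\theta}|\cdot|$ with absolute values, so the joint class genuinely appears. (Also, your second gap changes \emph{both} slots from empirical to population, so it produces a $\phi$-term as well as the $\psi$-term, not only the latter.) Second, your explanation of the factor $4$ in $L_\phi\leftarrow 4C_{Q_x}(\alpha)$ is wrong: it is not a bookkeeping artifact of the absolute-valued supremum but simply the sigmoid's Lipschitz constant. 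The general bound uses that $\phi\circ\sigma$ is $(L_\phi/4)$-Lipschitz, while for $\alpha$-loss one shows $\phi_\alpha\circ\sigma$ is $C_{Q_x}(\alpha)$-Lipschitz on $[-Q_x,Q_x]$; equating $L_\phi/4=C_{Q_x}(\alpha)$ yields the substitution.
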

\begin{proof}[Proof sketch]\let\qed\relax
Our proof involves the following steps:
\begin{itemize}[leftmargin=*]
    \item Building upon the proof techniques of Ji~\emph{et al.} \cite[Theorem~1]{JiZL21}, we bound the estimation error in terms of Rademacher complexities of \emph{compositional} function classes involving the CPE loss function. 
    \item We then upper bound these Rademacher complexities leveraging a contraction lemma for Lipschitz loss functions~\cite[Lemma~26.9]{shalev2014understanding}. We remark that this differs considerably from the way the bounds on Rademacher complexities in \cite[Corollary~1]{JiZL21} are obtained because of the explicit role of the loss function in our setting.  
    \item For the case of $\alpha$-GAN, we extend a result by Sypherd \emph{et al.}~\cite{sypherd2021journal} where they showed that $\alpha$-loss is Lipschitz for a logistic model with~\eqref{eq:clipalpha}. Noting that similar to the logistic model, we also have a sigmoid in the outer layer of the discriminator, we generalize the preceding observation by proving that $\alpha$-loss is Lipschitz when the input is equal to a sigmoid function acting on a \textit{neural network} model. This is the reason behind the dependence of the Lipschitz constant on the neural network model parameters (in terms of $Q_x$ and $Q_z$). Note that~\eqref{eq:clipalpha} is monotonically decreasing in $\alpha$, indicating the bound saturates. However, one is not able to make definitive statements regarding the estimation bounds for relative values of $\alpha$ because the LHS in~\eqref{eq:estimationboundrhs2} is \textit{also} a function of $\alpha$. Proof details are in 
    \if \extended 0%
    \cite[Appendix~E]{ISITextenver}.
    \fi%
    \if \extended 1%
    Appendix~\ref{proofoftheorem3}.
    \fi
\end{itemize}
\end{proof}

\begin{figure*}[t!]
    \centering
    \includegraphics[width=.9\linewidth]{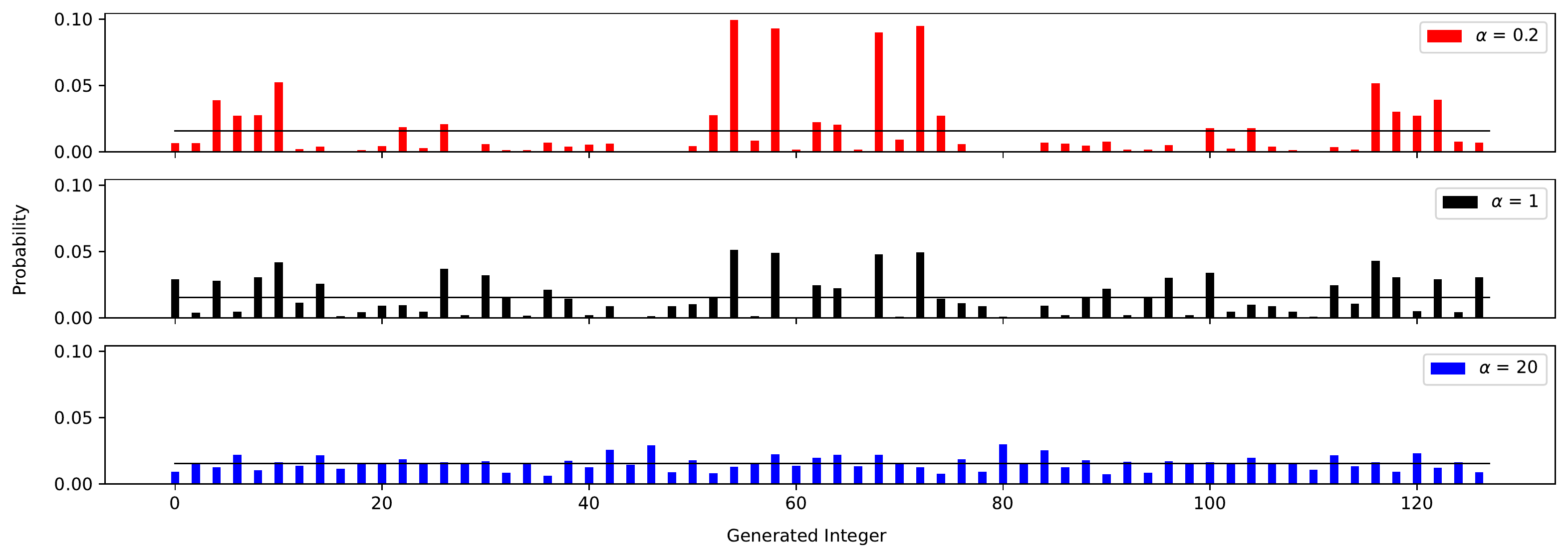}
    \caption{Histograms of averaged (over $10$ runs) $\hat{P}_{G_{\theta}}$ for $\alpha \in \{0.2, 1, 20\}$ in the \textbf{Base} setting. The thick black lines correspond to $P_{r}$ (uniform over the evens $\in [0,126]$). 
    No odd integers were output for any $\alpha$, indicating sufficient training.}
    \label{fig:simpleganhistograms}
\end{figure*}

\begin{table*}
    \centering
\begin{center}
\renewcommand{\arraystretch}{1.2}
\renewcommand{\tabcolsep}{2pt}
\begin{tabular}{|c|c!{\color{gray!50}\vrule }c!{\color{gray!50}\vrule }c!{\color{gray!50}\vrule }c|c!{\color{gray!50}\vrule }c!{\color{gray!50}\vrule }c!{\color{gray!50}\vrule }c|c!{\color{gray!50}\vrule }c!{\color{gray!50}\vrule }c!{\color{gray!50}\vrule }c|c!{\color{gray!50}\vrule }c!{\color{gray!50}\vrule }c!{\color{gray!50}\vrule }c|c!{\color{gray!50}\vrule }c!{\color{gray!50}\vrule }c!{\color{gray!50}\vrule }c|}
 \cline{2-21}
   \multicolumn{1}{c|}{}& \multicolumn{4}{c|}{\textbf{\%Noise = 0}} & \multicolumn{4}{c|}{\textbf{\%Noise = 10}} & \multicolumn{4}{c|}{\textbf{\%Noise = 15}} & \multicolumn{4}{c|}{\textbf{\%Noise = 20}} & \multicolumn{4}{c|}{\textbf{\%Noise = 30}}\\ 
 \hline
 $\pmb{\alpha}$ & \#Modes & \%Odd & TVD & JSD & \#Modes & \%Odd & TVD & JSD & \#Modes & \%Odd & TVD & JSD & \#Modes & \%Odd & TVD & JSD & \#Modes & \%Odd & TVD & JSD \\
 \hline
 0.2   
 & 55.6	& 0.0 & 0.618 &	0.272
 & 58.5 & 0.3 & 0.585 & 0.252
 & 56.6	& 0.0 & 0.635 & 0.283
 & 55.1 & 6.2 & 0.592 & 0.270
 & 56.0 & 27.5 & 0.663 & 0.338 \\
 \arrayrulecolor{gray!50}\hline \arrayrulecolor{black!100}
 0.5   
 & 59.6 & 0.0 &	0.586 &	0.249
 & 57.3	& 0.0 & 0.652 &	0.296
 & 58.6	& 0.0 &	0.566 & 0.236
 & 56.4	& 0.0 & 0.627 & 0.280
 & 61.0 & 0.0 &	0.561 &	0.224 \\
  \arrayrulecolor{gray!50}\hline \arrayrulecolor{black!100}
 0.7   
 & 60.7 & 0.0 &	0.597 & 0.255
 & 56.6	& 0.0 &	0.661 & 0.310
 & 56.7 & 0.0 &	0.618 & 0.279
 & 56.4 & 0.0 & 0.664 & 0.307
 & 55.4 & 0.0 & 0.653 & 0.297 \\
 \arrayrulecolor{gray!50}\hline\arrayrulecolor{black!100}
 1
 & 58.7 & 0.0 &	0.631 & 0.283
 & 60.3 & 0.0 &	0.582 & 0.247
 & 58.7 & 0.0 & 0.620 & 0.272
 & 58.6 & 0.0 &	0.609 & 0.271
 & 58.9 & 0.0 &	0.603 & 0.262 \\
 \arrayrulecolor{gray!50}\hline\arrayrulecolor{black!100}
4   
& 58.3 & 0.0 & 0.608 & 0.273   
& 58.3 & 1.2 & 0.618 & 0.282
& 57.6 & 4.5 & 0.650 & 0.300
& 58.2 & 1.5 & 0.596 & 0.265
& 61 & 0.0 & 0.591 & 0.250 \\
  \arrayrulecolor{gray!50}\hline \arrayrulecolor{black!100}
 10  
 & 61.8 & 0.0 &	0.478 & 0.174
 & 59.9 & 3.8 & 0.480 & 0.191
 & 62.3 & 10.6 & 0.503 & 0.202
 & 61.8 & 13.7 & 0.508 & 0.206
 & 61.8 & 14.1 & 0.486 & 0.199 \\
 \arrayrulecolor{gray!50}\hline\arrayrulecolor{black!100}
  20   
  & 63.2 & 0.0 & 0.327 & 0.088 
  & 62.7 & 5.3 & 0.328 & 0.103
  & 63.2 & 7.6 & 0.318 & 0.100
  & 63.5 & 9.0 & 0.299 & 0.098
  & 63.5 & 14.8 & 0.332 & 0.121 \\
 \hline
\end{tabular}
\end{center}
    \caption{Results for $\alpha$-GAN on Toy Dataset in \textbf{base} and \textbf{noisy} settings}
    \label{tab:simplegannoisytrue}
    \vspace{-0.1in}
\end{table*}

\vspace{-0.1in}
\section{Experimental Results}
We now present experimental results of $\alpha$-GAN trained over the set of $\alpha \in [0.2, 20]$ for a simple dataset. The real training examples in this dataset consist of $25,600$ unsigned seven-bit binary representations of uniformly-drawn \emph{even} integers from $0$ to $126$; \emph{i.e.,} $P_{r}$ is the uniform distribution on even integers between $0$ and $126$.
Note that we sometimes refer to even integer(s) as mode(s) (as is common in GAN literature).

We consider two settings: the first is a standard GAN training setup (\textbf{Base}) and the second (\textbf{Noisy}) differs from the first \emph{only} in  introducing noisy real samples. 
This may resemble a practical scenario where, unbeknownst to the practitioner implementing a GAN, the training data is mislabeled, \emph{e.g.}, when a cat is labeled as a dog.
Nevertheless in both settings, the goal of the generator $G_{\theta}$ is to learn the real distribution $P_{r}$.
Overall, we find that $\alpha$-GAN exhibits interesting characteristics as a function of $\alpha$, and there is significant utility in tuning $\alpha$ away from $\alpha = 1$ (vanilla GAN).
For both cases, we consider the same architectures for the generator and discriminator as detailed below. Our implementation builds on~\cite{simplegan}; full experimental details (and further results) are 
\if \extended 0%
in~\cite{ISITextenver}.
\fi
\if \extended 1%
in Appendix~F.
\fi%
\noindent\textbf{Model and experimental details.}
The generator, with $7$-length input and output, is modeled as $G_\theta(z) = \sigma(W_g z+b_g)$, where $\theta = \{W_g, b_g\}$, $W_g \in \mathbb R^{7 \times 7}$, $b_g \in \mathbb R^7$, and $ \sigma: \mathbb R \to (0,1)$ is the sigmoid function; 
the discriminator takes a $7$-length input and outputs a scalar with $ D_\omega(x) = \sigma(W_d x+b_d)$, where $\omega = \{W_d, b_d\}$, $W_d \in \mathbb R^{1 \times 7}$, and $b_d \in \mathbb R$. 
We use the following hyperparameter settings, which are fixed for all $\alpha$: learning rate of $0.001$, the Adam optimizer~\cite{kingma2014adam}, standard normal noise \emph{i.e.}, $P_{Z} = \mathcal{N}(\mathbf{0},\mathbb{I}_{7})$, batch size of $256$ for both the real and generator noise samples, and $2,000$ training epochs. 
After training, 
we feed each trained $\alpha$-GAN generator the same set of 20,000 noise samples (also from $P_{Z}$) to evaluate its performance\footnote{To eliminate additional randomness from test data, we use the same 20k samples, thereby illustrating the performance variations from changing $\alpha$.}. All results are averaged over $10$ runs for each $\alpha$, where the GAN is retrained in each run.

\noindent\textbf{Noisy real data setup.} 
We simulate noisy \textit{real} training examples as follows: for a chosen percentage ($\%$\textbf{Noise}) of corrupt samples that are sampled uniformly from the real training examples, we flip the least significant bit (LSB) in the binary representation of the even integer in order to make it \textit{odd}. For every $\%\textbf{Noise}\in \{10,15,20,30\}$, we train an $\alpha$-GAN with the corresponding noisy real samples.

\noindent\textbf{Evaluation metrics.}
 We evaluate the performance of the \textbf{Base} and \textbf{Noisy} cases using the following four metrics: number of output modes, percentage of synthetic outputs that are odd, and both TVD and JSD between the empirical $\hat{P}_{G_\theta}$ and the uniform $P_r$. 
The number of output modes refers to the number of unique even integers between $0$ and $126$ (maximum of $64$) output by the generator. 
We present these metrics for $\alpha \in \{0.2, 0.5, 0.7, 1, 4, 10, 20\}$ in Table~\ref{tab:simplegannoisytrue}. Figure~\ref{fig:simpleganhistograms} illustrates the (averaged) output probability distributions $\hat{P}_{G_{\theta}}$ for $\alpha = \{0.2,1,20\}$; note that $\alpha = 20$ exhibits the best overall performance, as it yields an averaged distribution closest to $P_{r}$.

\noindent\textbf{Interpretation of results in Figure~\ref{fig:simpleganhistograms} and Table~\ref{tab:simplegannoisytrue}.}
\begin{enumerate}[wide, labelindent=0pt]
    \item Our results suggest that for each $\alpha$, $\alpha$-GAN learns a mixture of Gaussians\footnote{We conjecture this because the latent noise driving the generator is Gaussian.} with the mixture approaching the uniform distribution for larger $\alpha$. The results in Fig.~\ref{fig:simpleganhistograms} confirm a conjecture raised in~\cite[Figure~2]{KurriSS21} that while different choices of $\alpha$ may ideally have equivalent convergence (now proved in Thm.~\ref{thm:equivalenceinconvergence}), in practice, there will be significant differences in the output distributions for each $\alpha$ arising from how the different gradients for $\alpha$-GAN affect convergence. Increasing the number of epochs did not change the observed behavior. However, as illustrated by the plots in Appendix~F\if \extended 1%
    ,
    \fi%
    \if \extended 0%
    in \cite{ISITextenver},
    \fi%
    larger $\alpha$ require more epochs to converge than smaller $\alpha$, an observation supported by the fact that $\alpha$-loss is non-convex for $\alpha > 1$ and becomes more non-convex as $\alpha$ increases.
    \item The result for $\alpha = 20$ in Fig.~\ref{fig:simpleganhistograms} is perhaps best explained by Sypherd \emph{et al.}~\cite{sypherd2021journal} where they show that in the standard supervised classification setting, models trained with $\alpha > 1$  approach the average probability of error (in estimating all the modes), thus yielding better overall performance. 
    In the GAN setting, we know from~\cite{KurriSS21} that as $\alpha \rightarrow \infty$, $\alpha$-GAN approaches the TV GAN. Evidently in this scenario, GANs resembling TV GAN far outperform the vanilla GAN.
    \item 
    The results in Table~\ref{tab:simplegannoisytrue} indicate that larger $\alpha$ perform the best with respect to average TVD and JSD. However, with increasing noise, larger $\alpha$ values also lead to more odd integer outputs; in other words, larger $\alpha$ learn the noisy distribution better. 
    In summary, several questions yet remain on evaluating the role of $\alpha$ in learning from noisy data. 
\end{enumerate}

\section{Conclusion}
Building on our prior work introducing $\alpha$-GANs, we have introduced three new results here on the one-to-one correspondence between CPE losses and $f$-divergences, convergence properties of the Arimoto divergences induced by $\alpha$-GANs, and the estimation error for CPE loss GANs including $\alpha$-GAN. Our results on a toy dataset suggest that tuning $\alpha$ can enhance the quality of the synthetic data, in this case, with larger values of $\alpha$ offering more accuracy with respect to the real distribution. 
More work is needed to better understand the choice of $\alpha$ in limiting mode collapse. Our recent work suggests that tuning $\alpha<1$ improves classification accuracy for imbalanced datasets~\cite{sypherd2021journal}; we conjecture this will hold for $\alpha$-GANs when the real data has an imbalance in samples for different modes.
We believe the analysis here can help guide how $\alpha$-GANs can address these challenges rigorously. 

\if \extended 1
\appendices
\section{CPE Loss-Based GANs: Additional Observations}\label{appendix:details-omitted-lossfn}
Let $\phi(\cdot)\coloneqq-\ell(1,\cdot)$ and $\psi(\cdot)\coloneqq-\ell(0,\cdot)$ in the sequel. The functions $\phi$ and $\psi$ are assumed to be monotonically increasing and decreasing functions, respectively, so as to retain the intuitive interpretation of the vanilla GAN (that the discriminator should output high values to real samples and low values to the generated samples). These functions should also satisfy the constraint
\begin{align}\label{eqn:condnonfnsforGAN}
    \phi(t)+\psi(t)\leq \phi({1}/{2})+\psi({1}/{2}),\ \text{for all}\ t\in[0,1], 
\end{align}
so that the optimal discriminator guesses uniformly at random (i.e., outputs a constant value ${1}/{2}$ irrespective of the input) when $P_r=P_{G_\theta}$. A loss function $\ell(y,\hat{y})$ is said to be \emph{symmetric}~\cite{reid2010composite} if $\psi(t)=\phi(1-t)$, for all $t\in[0,1]$. Notice that the value function considered by Arora \emph{et al.}~\cite{AroraGLMZ17} is a special case of $\eqref{eqn:lossfnps1}$, i.e., $\eqref{eqn:lossfnps1}$ recovers the value function in \cite[Equation~(2)]{AroraGLMZ17} when the loss function $\ell(y,\hat{y})$ is symmetric. For symmetric losses, concavity of the function $\phi$ is a sufficient condition for satisfying \eqref{eqn:condnonfnsforGAN}, but not a necessary condition.
\section{Equivalence of the Jensen-Shannon Divergence and the Total Variation Distance}\label{appnedix:simpler}
We first show that the total variation distance is stronger than the Jensen-Shannon divergence, i.e., $D_{\text{TV}}(P_n\|P)\rightarrow 0$ as $n\rightarrow \infty$ implies $D_{\text{JS}}(P_n\|P)\rightarrow 0$ as $n\rightarrow \infty$. Suppose $D_{\text{TV}}(P_n||P)\rightarrow 0$ as $n\rightarrow \infty$. Using the fact that the total variation distance upper bounds the Jensen-Shannon divergence~\cite[Theorem 3]{Lin91}, we have $D_{\text{JS}}(P_n||P)\leq (\log_\mathrm{e}{2}) D_{\text{TV}}(P_n||P)$, for each $n\in\mathbbm{N}$. This implies that $D_{\text{JS}}(P_n||P)\rightarrow 0$ as $n\rightarrow \infty$ since $D_{\text{TV}}(P_n||P)\rightarrow 0$ as $n\rightarrow \infty$. The proof for the other direction, i.e., the Jensen-Shannon divergence is stronger than the total variation distance, is exactly along the same lines as that of \cite[Theorem~2(1)]{ArjovskyCB17} using triangle and Pinsker's inequalities.

\section{Proof of Theorem~\ref{thm:correspondence}}\label{apndx:proof-of-thm1}
Consider a symmetric CPE loss $\ell(y,\hat{y})$, i.e., $\ell(1,\hat{y})=\ell(0,1-\hat{y})$. We may define an associated margin-based loss using a bijective link function $l:\mathbb{R}\rightarrow [0,1]$ as
\begin{align}\label{eqn:margin-from-CPE}
    \tilde{\ell}(t):=\ell(1,l(t)),
\end{align}
where the link $l$ satisfies a mild regularity condition 
\begin{align}\label{eqn:regularityonlink}
l(-t)=1-l(t)
\end{align}
(e.g., sigmoid function, $\sigma(t)=1/(1+\mathrm{e}^{-t})$ satisfies this condition). Consider the inner optimization problem in \eqref{eq:Goodfellowobj} with the value function in \eqref{eqn:lossfnps1} for this CPE loss $\ell$.
\begin{align}
    &\sup_\omega\int_{\mathcal{X}}(-p_r(x)\ell(1,D_\omega(x))-p_{G_\theta}(x)\ell(0,D_\omega(x)))\ dx\nonumber\\
    &= \int_{\mathcal{X}}\sup_{p_x\in[0,1]}(-p_r(x)\ell(1,p_x)-p_{G_\theta}(x)\ell(0,p_x))\ dx\label{eqn:thm1proof6}\\
   &=\int_{\mathcal{X}}\sup_{p_x\in[0,1]}(-p_r(x)\ell(1,p_x)-p_{G_\theta}(x)\ell(1,1-p_x))\ dx\label{eqn:thm1proof1}\\
    &=\int_{\mathcal{X}}\sup_{t_x\in\mathbb{R}}(-p_r(x)\ell(1,l(t_x))-p_{G_\theta}(x)\ell(1,1-l(t_x)))dx\\
        &=\int_{\mathcal{X}}\sup_{t_x\in\mathbb{R}}(-p_r(x)\ell(1,l(t_x))-p_{G_\theta}(x)\ell(1,l(-t_x)))\ dx\label{eqn:thm1proof2}\\
    &=\int_{\mathcal{X}}\sup_{t_x\in\mathbb{R}}(-p_r(x)\tilde{\ell}(t_x)-p_{G_\theta}(x)\tilde{\ell}(-t_x)\ dx\label{eqn:thm1proof3}\\
   & =\int_{\mathcal{X}}p_{G_\theta}(x)\left(-\inf_{t_x\in\mathbb{R}}\left(\tilde{l}(-t_x)+\frac{p_r(x)}{p_{G_\theta}(x)}\tilde{l}(t_x)\right)\right) dx\label{eqn:eqn:tm1proof4}
\end{align}
where \eqref{eqn:thm1proof1} follows because the CPE loss $\ell(y,\hat{y})$ is symmetric, \eqref{eqn:thm1proof2} follows from \eqref{eqn:regularityonlink}, and \eqref{eqn:thm1proof3} follows from the definition of the margin-based loss $\tilde{\ell}$ in \eqref{eqn:margin-from-CPE}. Now note that the function $f$ defined as
\begin{align}\label{eqn:tm1proof5}
    f(u)=-\inf_{t\in\mathbb{R}}\left(\tilde{\ell}(-t)+u\tilde{\ell}(t)\right)
\end{align}
is convex since the infimum of affine functions is concave (observed earlier in \cite{NguyenWJ09} in a correspondence between margin-based loss functions and $f$-divergences). So, from \eqref{eqn:eqn:tm1proof4}, we get
\begin{align}
    \sup_\omega\int_{\mathcal{X}}&(-p_r(x)\ell(1,D_\omega(x))-p_{G_\theta}(x)\ell(0,D_\omega(x)))\ dx\nonumber\\
    &=\int_{\mathcal{X}}p_{G_\theta}(x)f\left(\frac{p_r(x)}{p_{G_\theta}(x)}\right)\ dx\\
    &=D_f(P_r\|P_{G_\theta}).
\end{align}
Thus, the resulting min-max optimization in \eqref{eqn:GANgeneral} reduces to minimizing the $f$-divergence, $D_f(P_r\|P_{G_\theta})$ with $f$ as given in \eqref{eqn:tm1proof5}.

For the converse statement, first note that given a symmetric $f$-divergence, it follows from \cite[Theorem~1(b) and Corollary~3]{NguyenWJ09} that there exists a margin-based loss function $\tilde{\ell}$ such that $f$ can be expressed in the form
\eqref{eqn:tm1proof5}. We may define an associated symmetric CPE loss $\ell(y,\hat{y})$ with
\begin{align}
\ell(1,\hat{y}):=\tilde{\ell}(l^{-1}(\hat{y})),
\end{align}
where $l^{-1}$ is the inverse of the same link function. Now repeating the steps as in $\eqref{eqn:thm1proof6}-\eqref{eqn:eqn:tm1proof4}$, it is clear that the GAN based on this (symmetric) CPE loss results in minimizing the same symmetric $f$-divergence. 

 \section{Proof of Theorem~\ref{thm:equivalenceinconvergence}}\label{proofoftheorem4}
Noticing that $D_{f_\infty}(\cdot\|\cdot)=D_{\text{TV}}(\cdot\|\cdot)$ (see \cite{osterreicher2003new}, \cite[Theorem~2]{KurriSS21}), it suffices to show that $D_{f_\alpha}(\cdot\|\cdot)$ is equivalent to $D_{\text{TV}}(\cdot\|\cdot)$, for $\alpha>0$, i.e., $D_{f_\alpha}(P_n||P)\rightarrow 0$ as $n\rightarrow \infty$ if and only if $D_{\text{TV}}(P_n||P)\rightarrow 0$ as $n\rightarrow \infty$. To this end, we employ a property of the Arimoto divergence $D_{f_\alpha}$ which gives lower and upper bounds on it in terms of the total variation distance, $D_{\text{TV}}$. In particular, \"{O}sterreicher and Vajda~\cite[Theorem~2]{osterreicher2003new} proved that for any $\alpha>0$, probability distributions $P$ and $Q$, we have
\begin{align}\label{eqn:boundsonArimoto}
    \gamma_\alpha(D_{\text{TV}}(P||Q))\leq D_{f_\alpha}(P||Q)\leq \gamma_\alpha(1)D_{\text{TV}}(P||Q),
\end{align} 
where the function $\gamma_\alpha:[0,1]\rightarrow \mathcal{R}$ defined by $\gamma_\alpha(p)=\frac{\alpha}{\alpha-1}\left(\left(\left(1+p\right)^\alpha+\left(1-p\right)^\alpha\right)^\frac{1}{\alpha}-2^{\frac{1}{\alpha}}\right)$ for $\alpha\in(0,1)\cup(1,\infty)$ is convex and strictly monotone increasing such that $\gamma_\alpha(0)=0$ and $\gamma_\alpha(1)=\frac{\alpha}{\alpha-1}\left(2-2^\frac{1}{\alpha}\right)$. 

We first prove the `only if' part, i.e., $D_{f_\alpha}(P_n||P)\rightarrow 0$ as $n\rightarrow \infty$ implies $D_{\text{TV}}(P_n||P)\rightarrow 0$ as $n\rightarrow \infty$. Suppose $D_{f_\alpha}(P_n||P)\rightarrow 0$. From the lower bound in \eqref{eqn:boundsonArimoto}, it follows that $\gamma_\alpha(D_{\text{TV}}(P_n||P))\leq D_{f_\alpha}(P_n||P)$, for each $n\in\mathbbm{N}$. This implies that $\gamma_\alpha(D_{\text{TV}}(P_n||P))\rightarrow 0$ as $n\rightarrow \infty$. We show below that $\gamma_\alpha$ is invertible and $\gamma_\alpha^{-1}$ is continuous. Then it would follow that $\gamma_\alpha^{-1}\gamma_\alpha(D_{\text{TV}}(P_n||P))=D_{\text{TV}}(P_n||P)\rightarrow \gamma_\alpha^{-1}(0)=0$ as $n\rightarrow \infty$ proving that Arimoto divergence is stronger than the total variation distance. It remains to show that $\gamma_\alpha$ is invertible and $\gamma_\alpha^{-1}$ is continuous. Invertibility follows directly from the fact that $\gamma_\alpha$ is strictly monotone increasing function. For the continuity of $\gamma_\alpha^{-1}$, it suffices to show that $\gamma_\alpha(C)$ is closed for a closed set $C\subseteq [0,1]$. The closed set $C$ is compact since a closed subset of a compact set ($[0,1]$ in this case) is also compact. Note that convexity of $\gamma_\alpha$ implies continuity and $\gamma_\alpha(C)$ is compact since a continuous function of a compact set is also compact. By Heine-Borel theorem, this gives that $\gamma_\alpha(C)$ is closed (and bounded) as desired.

We prove the `if part' now, i.e., $D_{\text{TV}}(P_n||P)\rightarrow 0$ as $n\rightarrow \infty$ implies $D_{f_\alpha}(P_n||P)\rightarrow 0$. It follows from the upper bound in $\eqref{eqn:boundsonArimoto}$ that $D_{f_\alpha}(P_n||P)\leq D_{\text{TV}}(P_n||P)$, for each $n\in\mathbbm{N}$. This implies that $D_{f_\alpha}(P_n||P)\rightarrow 0$ as $n\rightarrow \infty$ which completes the proof. 
\section{Proof of Theorem~\ref{thm:estimationerror-upperbound}}\label{proofoftheorem3}
We upper bound the estimation error in terms of the Rademacher complexities of appropriately defined \emph{compositional} classes building upon the proof techniques of \cite[Theorem~1]{JiZL21}. We then bound these Rademacher complexities using a contraction lemma~\cite[Lemma~26.9]{shalev2014understanding}. Details are in order.

We first review the notion of Rademacher complexity.
\begin{definition}[Rademacher complexity]
Let $\mathcal{G}_\Omega:=\{g_\omega: g_\omega\ \text{is a function from}\ \mathcal{X}\ \text{to}\ \mathbb{R}$, $\omega\in\Omega\}$ and $S=\{X_1.\dots,X_n\}$ be a set of random samples in $\mathcal{X}$ drawn independent and identically distributed (i.i.d.) from a distribution $P_X$. Then, the Rademacher complexity of $\mathcal{G}_\Omega$ is defined as
\begin{align}
    \mathcal{R}_S(\mathcal{G}_\Omega)=\mathbb{E}_{X,\epsilon}\sup_{\omega\in\Omega}\left\lvert\frac{1}{n}\sum_{i=1}^n\epsilon_ig_\omega(x_i)\right\rvert
\end{align}
where $\epsilon_1,\dots,\epsilon_n$ are independent random variables uniformly distributed on $\{-1,+1\}$. 
\end{definition}
We write our discriminator model in \eqref{eqn:disc-model} in the form
\begin{align}\label{eqn:thm3proof1}
    D_\omega(x)=\sigma(f_\omega(x)),
\end{align}
where $f_\omega$ is exactly the same discriminator model defined in \cite[Equation~(26)]{JiZL21}. Now by following the similar steps as in \cite[Equations~(16)-(18)]{JiZL21} by replacing $f_\omega(\cdot)$ in the first and second expectation terms in the definition of $d_{\mathcal{F}_{nn}}(\cdot,\cdot)$ by $\phi(D_\omega(\cdot))$ and $-\psi(D_\omega(\cdot))$, respectively, we get
\begin{align}
    &d^{(\ell)}_{\mathcal{F}_{nn}}(P_r,\hat{P}_{G_{\hat{\theta}^*}})-\inf_{\theta\in\Theta} d^{(\ell)}_{\mathcal{F}_{nn}}(P_r,P_{G_{\theta}})\nonumber\\
    &\leq 2\sup_{\omega}\left\lvert\mathbb{E}_{X\sim P_r}\phi(D_\omega(X))-\frac{1}{n}\sum_{i=1}^n\phi(D_\omega(X_i))\right\rvert\nonumber\\
    &\hspace{12pt}+2\sup_{\omega,\theta}\left\lvert\mathbb{E}_{Z\sim P_Z}\psi(D_\omega(g_\theta(Z)))-\frac{1}{m}\sum_{j=1}^m\psi(D_\omega(g_\theta(Z_j)))\right\rvert\label{eqn:thm3proof2}
\end{align}
Let us denote the supremums in the first and second terms in \eqref{eqn:thm3proof2} by $F^{(\phi)}(X_1,\dots,X_n)$ and $G^{(\psi)}(Z_1,\dots,Z_m)$, respectively. We next bound $G^{(\psi)}(Z_1,\dots,Z_m)$. Note that $\psi(\sigma(\cdot))$ is $\frac{L_\psi}{4}$-Lipschitz since it is a composition of two Lipschitz functions $\psi(\cdot)$ and $\sigma(\cdot)$ which are $L_\psi$- and $\frac{1}{4}$-Lipschitz respectively. For any $z_1,\dots,z_j,\dots,z_m,z_j^\prime$, using $\sup_r|h_1(r)|-\sup_r|h_2(r)|\leq \sup_r |h_1(r)-h_2(r)|$, we have 
\begin{align}
   &G^{(\psi)}(z_1,\dots,z_j,\dots,z_m)-G^{(\psi)}(z_1,\dots,z_j^\prime,\dots,z_m)\nonumber\\
   &\leq \sup_{\omega,\theta}\frac{1}{m}\left\lvert\psi(D_\omega(g_\theta(z_j)))-\psi(D_\omega(g_\theta(z_j^\prime)))\right\rvert\\
   &\leq \sup_{\omega,\theta}\frac{1}{m}\left\lvert\psi(\sigma(f_\omega(g_\theta(z_j))))-\psi(\sigma(f_\omega(g_\theta(z_j^\prime))))\right\rvert\label{eqn:thm3proof3}\\
      &\leq \frac{L_\psi}{4}\sup_{\omega,\theta}\frac{1}{m}\left\lvert \sigma(f_\omega(g_\theta(z_j)))-\sigma(f_\omega(g_\theta(z_j^\prime)))\right\rvert\label{eqn:thm3proof4}\\
            &\leq \frac{L_\psi}{4}\frac{2}{m}\left(M_k\prod_{i=1}^{k-1}(M_iR_i)\right)\left(N_l\prod_{j=1}^{l-1}(N_jS_j)\right)B_z\label{eqn:thm3proof6}\\
            &=\frac{L_\psi Q_z}{2m}\label{eqn:thm3proof7},
\end{align}
where \eqref{eqn:thm3proof3} follows from \eqref{eqn:thm3proof1}, \eqref{eqn:thm3proof4} follows because $\psi(\sigma(\cdot))$ is $\frac{L_\psi}{4}$-Lipschitz, \eqref{eqn:thm3proof6} follows by using the Cauchy-Schwarz inequality and the fact that $||Ax||_2\leq ||A||_F|||x||_2$ (as observed in \cite{JiZL21}), and \eqref{eqn:thm3proof7} follows by defining
\begin{align}\label{eqn:Q_zparameter}
    Q_z\coloneqq\left(M_k\prod_{i=1}^{k-1}(M_iR_i)\right)\left(N_l\prod_{j=1}^{l-1}(N_jS_j)\right)B_z.
\end{align}
Using \eqref{eqn:thm3proof7}, the McDiarmid's inequality~\cite[Lemma~26.4]{shalev2014understanding} implies that, with probability at least $1-\delta$,
\begin{align}
   &G^{(\psi)}(Z_1,\dots,Z_j,\dots,Z_m)\nonumber\\
   &\leq\mathbb{E}_ZG^{(\psi)}(Z_1,\dots,Z_j,\dots,Z_m)+\frac{L_\psi Q_z}{2}\sqrt{\log{\frac{1}{\delta}}/(2m)}.\label{eqn:thm3proof8}
\end{align}
Following the standard steps similar to \cite[Equation~(20)]{JiZL21}, the expectation term in \eqref{eqn:thm3proof8} can be upper bounded as
\begin{align}
  \mathbb{E}_ZG^{(\psi)}&(Z_1,\dots,Z_j,\dots,Z_m)\nonumber\\
  &\leq 2\mathbb{E}_{Z,\epsilon}\sup_{\omega,\theta}\left\lvert\frac{1}{m}\sum_{j=1}^m\epsilon_j\psi(D_\omega(g_\theta(Z_j)))\right\lvert \\
  &=:2\mathcal{R}_{S_z}(\mathcal{H}^{(\psi)}_{\Omega\times\Theta})
\end{align}
So, we have, with probability at least $1-\delta$,
\begin{align}
    G^{(\psi)}(Z_1,\dots,Z_j,\dots,Z_m)
    \leq 2\mathcal{R}_{S_z}(\mathcal{H}^{(\psi)}_{\Omega\times\Theta})+\sqrt{\log{\frac{1}{\delta}}}\frac{L_\psi Q_z}{2\sqrt{2m}}.\label{eqn:thm3proof9}
\end{align}
Using a similar approach, we have, with probability at least $1-\delta$,
\begin{align}\label{eqn:thm3proof10}
    F^{(\phi)}(X_1,\dots,X_n)\leq 2\mathcal{R}_{S_x}(\mathcal{F}_\Omega^{(\phi)})+\sqrt{\log{\frac{1}{\delta}}}\frac{L_\phi Q_x}{2\sqrt{2n}},
\end{align}
where 
\begin{align}
    \mathcal{R}_{S_x}(\mathcal{F}_\Omega^{(\phi)}):=\mathbb{E}_{X,\epsilon}\sup_{\omega}\left\lvert\frac{1}{n}\sum_{i=1}^n\epsilon_i\phi(D_\omega(X_i))\right\lvert. 
\end{align}
Combining \eqref{eqn:thm3proof2}, \eqref{eqn:thm3proof9}, and \eqref{eqn:thm3proof10} using a union bound, we get, with probability at least $1-2\delta$,
\begin{align}
    d^{(\ell)}_{\mathcal{F}_{nn}}(P_r,\hat{P}_{G_{\hat{\theta}^*}})-\inf_{\theta\in\Theta} &d^{(\ell)}_{\mathcal{F}_{nn}}(P_r,P_{G_{\theta}})\nonumber\\
    &\leq 4\mathcal{R}_{S_x}(\mathcal{F}_\Omega^{(\phi)})+4\mathcal{R}_{S_z}(\mathcal{H}^{(\psi)}_{\Omega\times\Theta})\nonumber\\
    &\hspace{12pt}+\sqrt{\log{\frac{1}{\delta}}}\left(\frac{L_\phi Q_x}{\sqrt{2n}}+\frac{L_\psi Q_z}{\sqrt{2m}}\right)\label{eqn:thm3proof11}. 
\end{align}
Now we bound the Rademacher complexities in the RHS of \eqref{eqn:thm3proof11}. We present the contraction lemma on Rademacher complexity required to obtain these bounds. For $A\subset\mathbb{R}^n$, let $\mathcal{R}(A):=\mathbb{E}_\epsilon\left[\sup_{a\in A}\left\lvert\frac{1}{n}\sum_{i=1}^n\epsilon_ia_i\right\rvert\right]$.
\begin{lemma}[Lemma~26.9, \cite{shalev2014understanding}]\label{lemma:contraction}
For each $i\in\{1,\dots,n\}$, let $\gamma_i:\mathbb{R}\rightarrow\mathbb{R}$ be a $\rho$-Lipschitz function. Then, for $A\subset\mathbb{R}^n$,
\begin{align}
    \mathcal{R}(\gamma\circ A)\leq \rho \mathcal{R}(A),
\end{align}
where $\gamma\circ A:=\{(\gamma_1(a_1),\dots,\gamma_n(a_n)):a\in A\}$.
\end{lemma}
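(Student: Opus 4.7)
The plan is to mirror the structure of the proof by Ji \emph{et al.}~\cite{JiZL21} but with the CPE loss function explicitly carried through the analysis. First, since $\hat{\theta}^*$ is the empirical minimizer, I would decompose the estimation error via the usual add-and-subtract trick into twice the worst-case empirical-to-population deviations of $\phi(D_\omega(X))$ over $\omega$ (with $X\sim P_r$) and of $\psi(D_\omega(G_\theta(Z)))$ over $(\omega,\theta)$ (with $Z\sim P_Z$). The two deviations decouple cleanly into one term driven by the real samples $S_x$ and one driven by the noise samples $S_z$.

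Next I would control each sup-deviation in two steps: concentration and symmetrization. For the generator-side term, changing one $Z_j$ alters the supremum by at most $\tfrac{L_\psi}{4}\cdot \tfrac{2Q_z}{m}$, where $\tfrac{1}{4}$ is the Lipschitz constant of $\sigma$ and $Q_z := U_\omega U_\theta B_z$ is the uniform pre-activation bound obtained via Cauchy--Schwarz together with the Frobenius-norm constraints on the weights and positive homogeneity of the activations (as in~\cite{JiZL21}). McDiarmid's inequality then yields the $\sqrt{\log(1/\delta)/(2m)}$ deviation term, while the standard symmetrization step bounds the expected sup-deviation by twice the Rademacher complexity of the compositional class $\{\psi\circ\sigma\circ f_\omega\circ G_\theta\}$; the discriminator-side term is handled identically with $Q_x := U_\omega B_x$.

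The third step is to bound these Rademacher complexities, and this is where a contraction argument replaces Ji \emph{et al.}'s direct treatment. Since $\sigma$ is $\tfrac{1}{4}$-Lipschitz and $\psi$ (resp.\ $\phi$) is $L_\psi$- (resp.\ $L_\phi$-) Lipschitz, the composition $\psi\circ\sigma$ (resp.\ $\phi\circ\sigma$) is $\tfrac{L_\psi}{4}$- (resp.\ $\tfrac{L_\phi}{4}$-) Lipschitz. Applying Lemma~26.9 of~\cite{shalev2014understanding} pushes the loss through and reduces the problem to bounding the Rademacher complexity of the pure neural-network classes $\{f_\omega\circ G_\theta\}$ and $\{f_\omega\}$, which Ji \emph{et al.} already handle by a layer-by-layer peeling giving $U_\omega U_\theta B_z\sqrt{3(k+l-1)}/\sqrt{m}$ and $U_\omega B_x\sqrt{3k}/\sqrt{n}$, respectively. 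Combining the two sides with a union bound yields the general Lipschitz-CPE statement; the factors of $\tfrac{1}{4}$ from the sigmoid exactly cancel the factor of $4$ accumulated from the ``twice-the-sup-deviation + twice-the-Rademacher'' decomposition.

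The main obstacle is the $\alpha$-GAN specialization, since $\phi_\alpha(p) = \tfrac{\alpha}{\alpha-1}(1 - p^{(\alpha-1)/\alpha})$ has derivative $\phi_\alpha'(p) = -p^{-1/\alpha}$ which blows up at $p=0$, so $\phi_\alpha$ is \emph{not} globally Lipschitz and one cannot simply read off an $L_\phi$. To get around this, I would extend the observation of Sypherd \emph{et al.}~\cite{sypherd2021journal}, made for a single-layer logistic model, to our neural-network discriminator: since $|f_\omega(x)|\le Q_x$ uniformly, the sigmoid output lies in the compact set $[\sigma(-Q_x),\sigma(Q_x)]$, bounded away from $0$ and $1$. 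A direct chain-rule calculation gives $|(\phi_\alpha\circ\sigma)'(t)| = \sigma(t)^{(\alpha-1)/\alpha}\sigma(-t)$, and maximizing this over $|t|\le h$ yields the two cases of $C_h(\alpha)$ in~\eqref{eq:clipalpha}: monotonicity in $t$ forces the maximum to the boundary $t=-h$ when $\alpha\le 1$, while for $\alpha\ge 1$ an interior critical point at $\sigma(t)=(\alpha-1)/(2\alpha-1)$ gives the closed form. Thus $\phi_\alpha\circ\sigma$ is $C_{Q_x}(\alpha)$-Lipschitz on the relevant input range, and substituting $L_\phi/4 \leftarrow C_{Q_x}(\alpha)$ (equivalently $L_\phi \leftarrow 4C_{Q_x}(\alpha)$) in the general bound, and symmetrically for $\psi$ with $Q_z$, gives the $\alpha$-GAN specialization.
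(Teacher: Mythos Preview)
Your proposal does not address the stated lemma at all. The statement you were asked to prove is the Rademacher contraction lemma (Lemma~26.9 of \cite{shalev2014understanding}), which asserts $\mathcal{R}(\gamma\circ A)\le \rho\,\mathcal{R}(A)$ for coordinatewise $\rho$-Lipschitz maps $\gamma_i$. The paper does not prove this lemma either; it simply quotes it from \cite{shalev2014understanding} and \emph{applies} it in the proof of Theorem~\ref{thm:estimationerror-upperbound}. A proof of the contraction lemma itself would proceed by the standard one-coordinate-at-a-time argument: condition on $\epsilon_2,\dots,\epsilon_n$, expand the expectation over $\epsilon_1\in\{\pm1\}$, and use the Lipschitz property of $\gamma_1$ to replace $\gamma_1(a_1)-\gamma_1(a_1')$ by $\rho(a_1-a_1')$ inside the supremum; then iterate over the remaining coordinates. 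None of this appears in your write-up.

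What you have written is instead a complete (and essentially correct) proof sketch of Theorem~\ref{thm:estimationerror-upperbound}, the estimation-error bound for CPE loss GANs. If that were the intended target, your approach matches the paper's almost step for step: the add-and-subtract decomposition into two sup-deviations following \cite{JiZL21}, the bounded-differences/McDiarmid step with constant $\tfrac{L_\psi Q_z}{2m}$, symmetrization to Rademacher complexity, the contraction lemma applied to $\phi\circ\sigma$ and $\psi\circ\sigma$ with Lipschitz constants $L_\phi/4$ and $L_\psi/4$, and finally the layer-peeling bounds from \cite{JiZL21}. Your treatment of the $\alpha$-GAN specialization via the restricted range $|f_\omega|\le Q_x$ and the explicit derivative $|(\phi_\alpha\circ\sigma)'(t)|=\sigma(t)^{(\alpha-1)/\alpha}\sigma(-t)$ is exactly how the paper invokes \cite[Lemma~7]{sypherd2021journal}. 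So the content is right, but it is attached to the wrong statement.
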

Note that $\phi(\sigma(\cdot))$ is $\frac{L_\phi}{4}$-Lipschitz since it is a composition of two Lipschitz functions $\phi(\cdot)$ and $\sigma(\cdot)$ which are $L_\phi$- and $\frac{1}{4}$-Lipschitz respectively. Consider
 \begin{align}
       &\mathcal{R}_{S_x}(\mathcal{F}_\Omega^{(\phi)})\nonumber\\
       &= \mathbb{E}_X\left[\mathcal{R}\left(\{\left(\phi(D_\omega(X_1)),\dots,\phi(D_\omega(X_n))\right):\omega\in\Omega\}\right)\right]\\
       &= \mathbb{E}_X\left[\mathcal{R}\left(\{\left(\phi(\sigma(f_\omega(X_1))),\dots,\phi(\sigma(f_\omega(X_n)))\right):\omega\in\Omega\}\right)\right]\label{eqn:thm3proof12}\\
       &\leq \frac{L_\phi}{4}\mathbb{E}_X\left[\mathcal{R}\left(\{\left(f_\omega(X_1),\dots,(f_\omega(X_n)\right):\omega\in\Omega\}\right)\right]\label{eqn:thm3proof13}\\
       &\leq \frac{L_\phi Q_x\sqrt{3k}}{4\sqrt{n}}\label{eqn:thm3proof14}
    \end{align}
    where \eqref{eqn:thm3proof12} follows from \eqref{eqn:thm3proof1}, \eqref{eqn:thm3proof13} follows from Lemma~\ref{lemma:contraction} by substituting $\gamma(\cdot)=\phi(\sigma(\cdot))$, and \eqref{eqn:thm3proof14} follows from \cite[Proof of Corollary~1]{JiZL21}. Using a similar approach,  we obtain
    \begin{align}
        \mathcal{R}_{S_z}(\mathcal{H}^{(\psi)}_{\Omega\times\Theta})\leq \frac{L_\psi Q_z\sqrt{3(k+l-1)}}{4\sqrt{m}}\label{eqn:thm3proof15}.
    \end{align}
Substituting \eqref{eqn:thm3proof14} and \eqref{eqn:thm3proof15} into \eqref{eqn:thm3proof11} gives \eqref{eq:estimationboundrhs2}.     
\subsection{Specialization to $\alpha$-GAN}
Let $\phi_\alpha(p)=\psi_\alpha(1-p)=\frac{\alpha}{\alpha-1}\left(1-p^{\frac{\alpha-1}{\alpha}}\right)$. It is shown in \cite[Lemma~7]{sypherd2021journal} that $\phi_\alpha(\sigma(\cdot))$ is  $C_h(\alpha)$-Lipschitz in $[-h,h]$, for $h>0$, with $C_h(\alpha)$ as given in $\eqref{eq:clipalpha}$. Now using the Cauchy-Schwarz inequality and the fact that $||Ax||_2\leq ||A||_F||x||_2$, it follows that 
\begin{align}
    |f_\omega(\cdot)|\leq Q_x,\\
    |f_\omega(g_\theta(\cdot))|\leq Q_z,
\end{align}
where $Q_x:=M_k\prod_{i=1}^{k-1}(M_iR_i)B_x$ and with $Q_z$ as in \eqref{eqn:Q_zparameter}. So, we have $f_\omega(\cdot)\in[-Q_x,Q_x]$ and $f_\omega(g_\theta(\cdot))\in[-Q_z,Q_z]$. Thus, we have that $\psi_\alpha(\sigma(\cdot))$ and $\phi_\alpha(\sigma(\cdot))$ are $C_{Q_z}(\alpha)$- and $C_{Q_x}(\alpha)$-Lipschitz, respectively. Now specializing the steps \eqref{eqn:thm3proof4} and \eqref{eqn:thm3proof13} with these Lipschitz constants, we get the following bound with the substitutions $\frac{L_\phi}{4} \leftarrow C_{Q_x}(\alpha)$ and $\frac{L_\psi}{4} \leftarrow 4C_{Q_z}(\alpha)$ in \eqref{eq:estimationboundrhs2}:
\begin{align}
    &d^{(\ell_\alpha)}_{\mathcal{F}_{nn}}(P_r,\hat{P}_{G_{\hat{\theta}^*}})-\inf_{\theta\in\Theta} d^{(\ell_\alpha)}_{\mathcal{F}_{nn}}(P_r,P_{G_{\theta}})\nonumber\\
    &\leq \frac{4C_{Q_x}(\alpha) Q_x\sqrt{3k}}{\sqrt{n}}+\frac{4C_{Q_z}(\alpha) Q_z\sqrt{3(k+l-1)}}{\sqrt{m}}\nonumber\\
    &\hspace{12pt}+2\sqrt{2\log{\frac{1}{\delta}}}\left(\frac{C_{Q_x}(\alpha)Q_x}{\sqrt{n}}+\frac{C_{Q_z}(\alpha)Q_z}{\sqrt{m}}\right).
\end{align}

\section{Further Experimental Details}
The GAN architecture is as follows: the generator, with $7$-length input and output, is modeled as $G_\theta(z) = \sigma(W_g z+b_g)$, where $\theta = \{W_g, b_g\}$, $W_g \in \mathbb R^{7 \times 7}$, $b_g \in \mathbb R^7$, and $ \sigma: \mathbb R \to (0,1)$ is the sigmoid function given by $\sigma(t) = (1 + e^{-t})^{-1}$;
the discriminator takes a $7$-length input and outputs a scalar with $ D_\omega(x) = \sigma(W_d x+b_d)$, where $\omega = \{W_d, b_d\}$, $W_d \in \mathbb R^{1 \times 7}$, and $b_d \in \mathbb R$. 

In order to convert the generator's output $G_\theta(z) \in (0,1)^7$ into the corresponding binary representation $ b \in \{0,1\}^7$ when evaluating the performance of the trained generator, we use a threshold, i.e. for $i \in \{1,2,\dots,7\}$, $b_i = 1$ if $G_\theta(z)_i \ge 0.5$ and $b_i = 0$ otherwise.

Training was done on a computing cluster using NVIDIA V100 GPUs. See Figs.~\ref{fig:simplegan-disc-output_alpha-0.5}, \ref{fig:simplegan-disc-output_alpha-0.7}, \ref{fig:simplegan-disc-output_alpha-1}, \ref{fig:simplegan-disc-output_alpha-4}, \ref{fig:simplegan-disc-output_alpha-10}, and \ref{fig:simplegan-disc-output_alpha-20} for the plots of the discriminator output (of a single run) for the real, generated, and validation data for $\alpha = 0.5$, $0.7$, $1$, $4$, $10$, and $20$, respectively, in the \textbf{Base} setting. The validation data consists of $5,000$ synthetic examples created in the same way as the real training data. Note that the discriminator output converges close to $1/2$ for all $\alpha$.

\begin{figure*}[b!]
    \centering
    \includegraphics[width=0.95\linewidth]{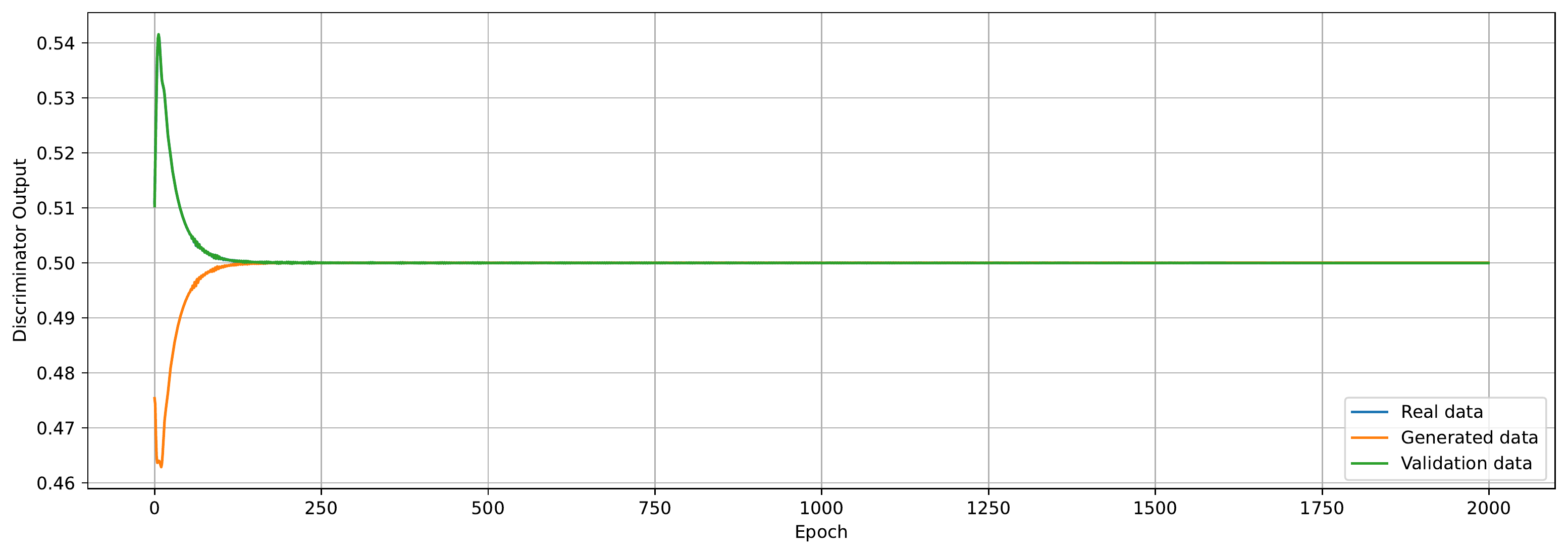}
    \caption{Plot of discriminator output for real, generated, and validation data over $2,000$ epochs for $\alpha = 0.5$ in the \textbf{Base} setting.
    }
    \label{fig:simplegan-disc-output_alpha-0.5}
\end{figure*}

\begin{figure*}[b!]
    \centering
    \includegraphics[width=0.95\linewidth]{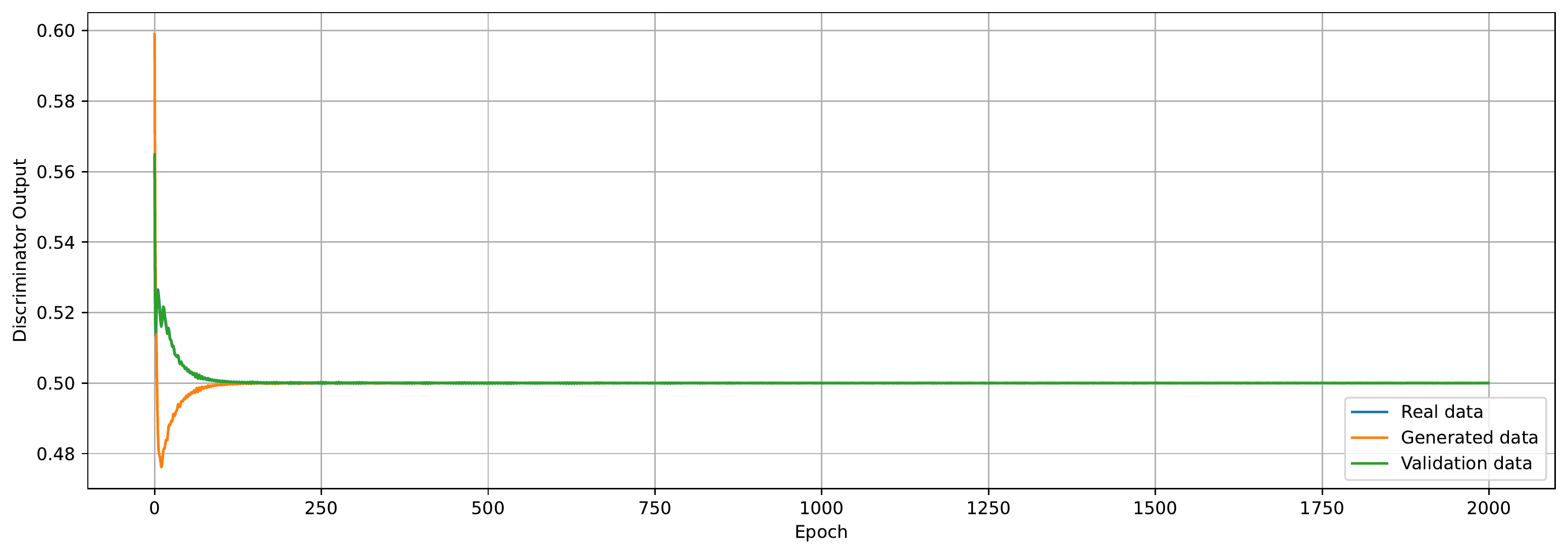}
    \caption{Plot of discriminator output for real, generated, and validation data over $2,000$ epochs for $\alpha = 0.7$ in the \textbf{Base} setting.
    }
    \label{fig:simplegan-disc-output_alpha-0.7}
\end{figure*}

\begin{figure*}[htbp]
    \centering
    \includegraphics[width=0.95\linewidth]{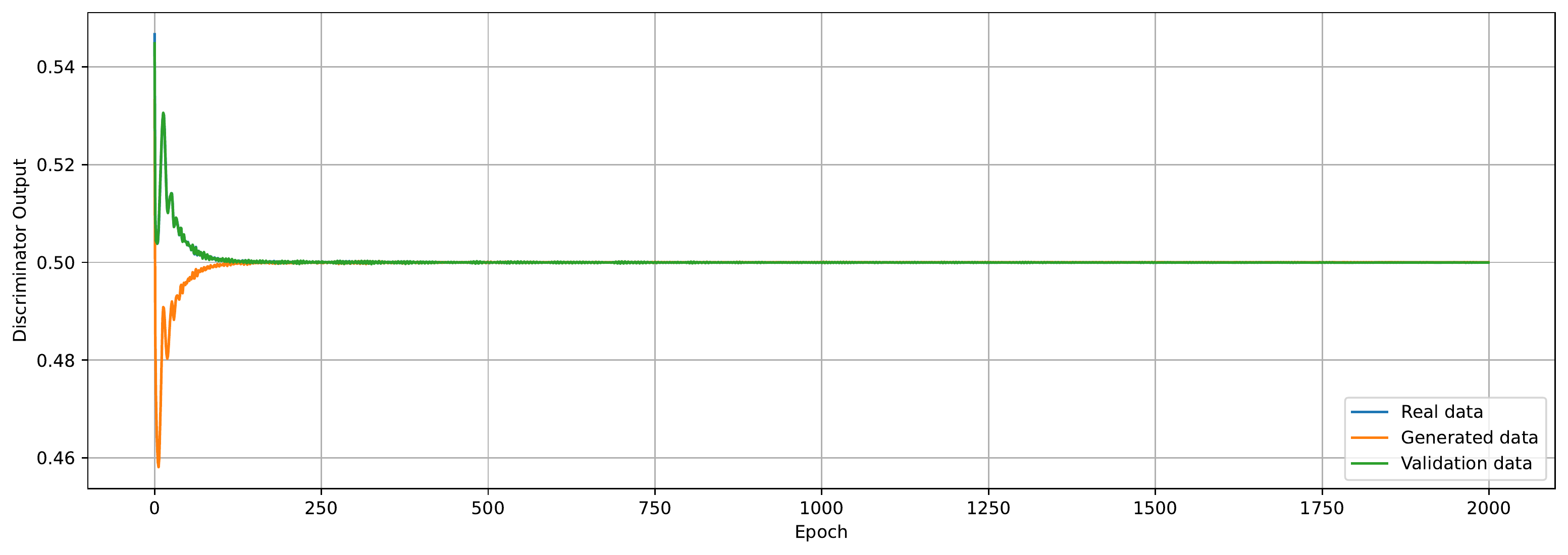}
    \caption{Plot of discriminator output for real, generated, and validation data over $2,000$ epochs for $\alpha = 1$ in the \textbf{Base} setting.
    }
    \label{fig:simplegan-disc-output_alpha-1}
\end{figure*}

\begin{figure*}[htbp]
    \centering
    \includegraphics[width=0.95\linewidth]{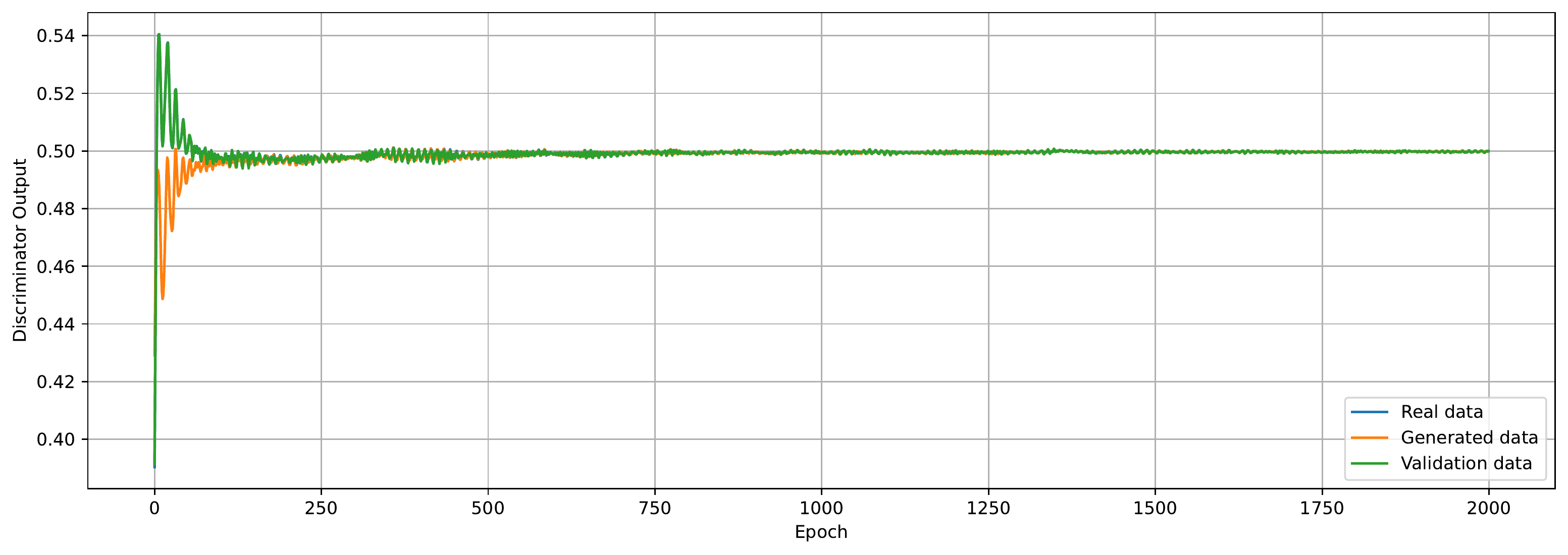}
    \caption{Plot of discriminator output for real, generated, and validation data over $2,000$ epochs for $\alpha = 4$ in the \textbf{Base} setting.
    }
    \label{fig:simplegan-disc-output_alpha-4}
\end{figure*}

\begin{figure*}[htbp]
    \centering
    \includegraphics[width=0.95\linewidth]{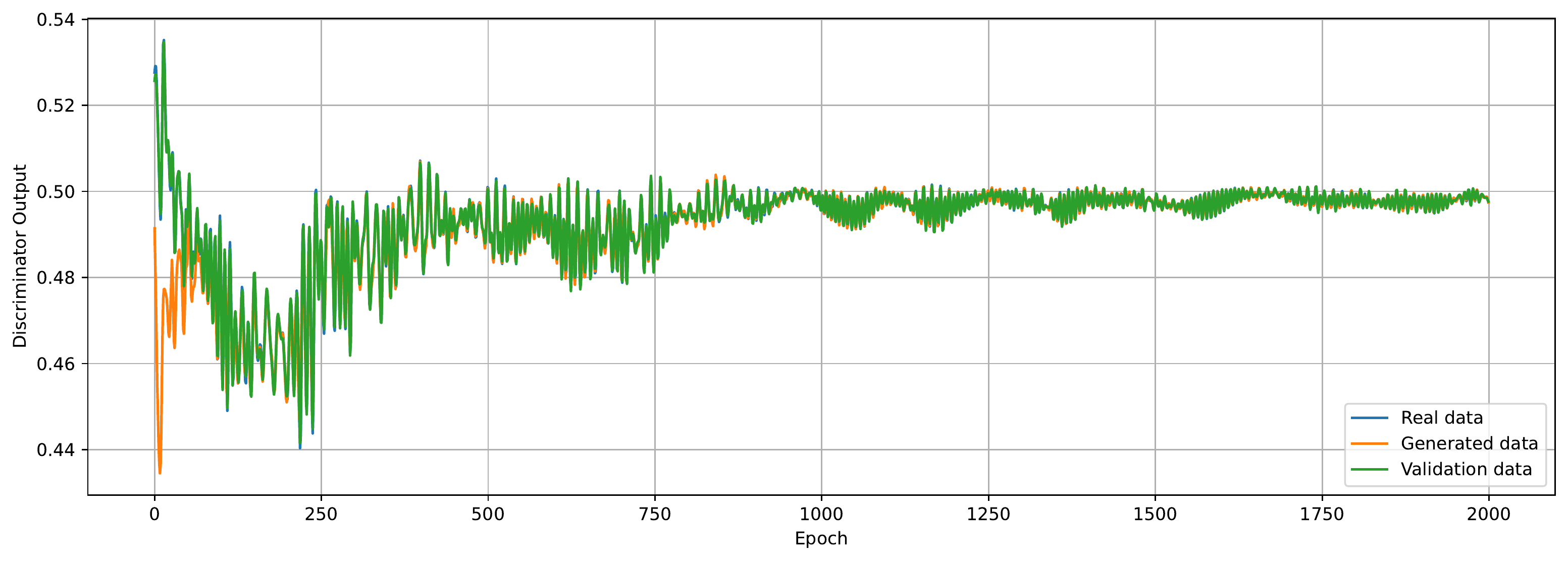}
    \caption{Plot of discriminator output for real, generated, and validation data over $2,000$ epochs for $\alpha = 10$ in the \textbf{Base} setting.
    }
    \label{fig:simplegan-disc-output_alpha-10}
\end{figure*}

\begin{figure*}[htbp]
    \centering
    \includegraphics[width=0.95\linewidth]{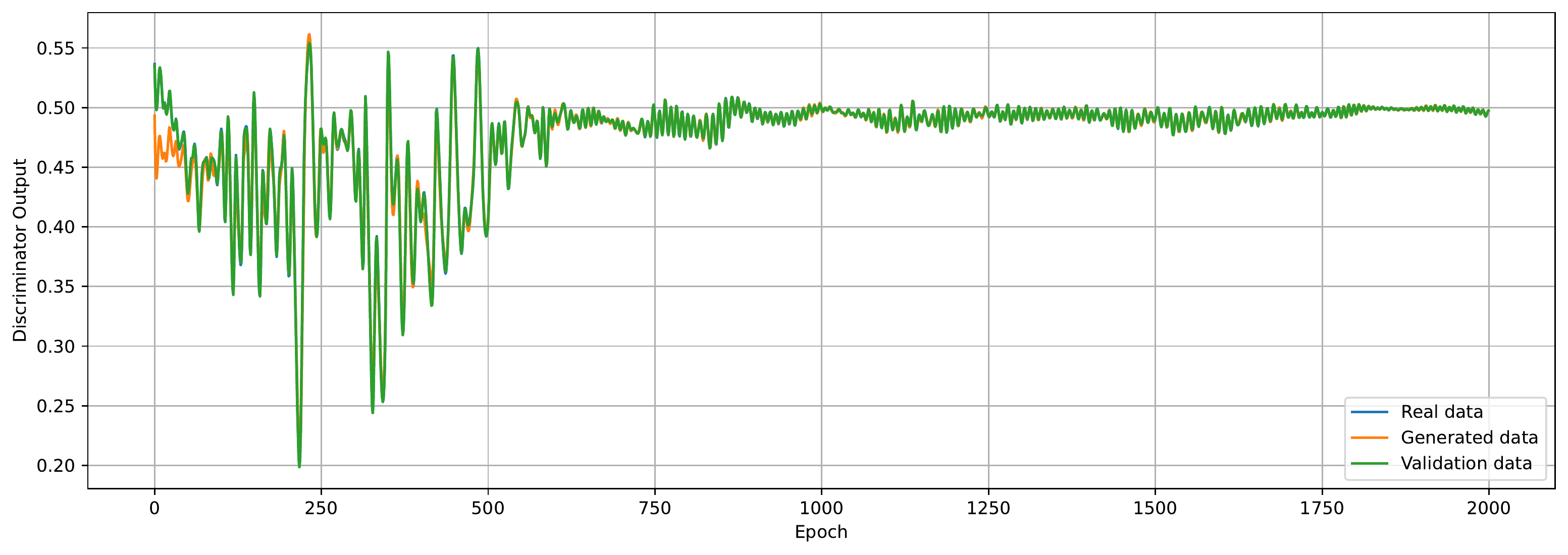}
    \caption{Plot of discriminator output for real, generated, and validation data over $2,000$ epochs for $\alpha = 20$ in the \textbf{Base} setting.
    }
    \label{fig:simplegan-disc-output_alpha-20}
\end{figure*}
\fi
\bibliographystyle{IEEEtran}
\clearpage
\bibliography{Bibliography}
\end{document}